\documentclass{article}

\usepackage{microtype}
\usepackage{graphicx}
\usepackage{subcaption}
\usepackage{booktabs} 

\usepackage[preprint]{icml2026}

\usepackage[utf8]{inputenc} 
\usepackage[T1]{fontenc}    
\usepackage[colorlinks=true,citecolor=teal]{hyperref}
\usepackage{url}            
\usepackage{booktabs}       
\usepackage{amsfonts}       
\usepackage{nicefrac}       
\usepackage{microtype}      
\usepackage{xcolor}         

\usepackage{amsmath}
\usepackage{amssymb}
\usepackage{mathtools}
\usepackage{amsthm}
\usepackage{multirow}
\usepackage{float}
\usepackage{algorithm}
\usepackage{algorithmic}
\usepackage{makecell}

\usepackage{wrapfig}
\usepackage{adjustbox}
\usepackage{enumitem}
\usepackage{graphicx}
\usepackage{bm}
\usepackage{comment}

\usepackage[capitalize,noabbrev]{cleveref}

\usepackage[utf8]{inputenc} 
\usepackage[T1]{fontenc}    
\usepackage[colorlinks=true,citecolor=teal]{hyperref}
\usepackage{url}            
\usepackage{booktabs}       
\usepackage{amsfonts}       
\usepackage{nicefrac}       
\usepackage{microtype}      
\usepackage{xcolor}         

\usepackage{amsmath}
\usepackage{amssymb}
\usepackage{mathtools}
\usepackage{amsthm}
\usepackage{multirow}
\usepackage{float}
\usepackage{algorithm}
\usepackage{algorithmic}

\usepackage{wrapfig}
\usepackage{adjustbox}
\usepackage{enumitem}
\usepackage{graphicx}
\usepackage{bm}
\usepackage{comment}

\newtheorem{theorem}{Theorem}
\newtheorem{lemma}{Lemma}

\newtheorem{definition}{Definition}
\newtheorem{remark}{Remark}

\definecolor{softgreen}{RGB}{100,200,100}
\definecolor{lightblue}{RGB}{173,216,230}
\definecolor{lavender}{RGB}{230,230,250}

\definecolor{metacolor}{HTML}{0064E0}
\hypersetup{
  colorlinks=true,
  linkcolor=fireenginered,
  citecolor=metacolor,
  urlcolor=metacolor
}
\definecolor{fireenginered}{rgb}{0.81, 0.09, 0.13}

\usepackage{titletoc}
\newcommand\DoToC{%
  \startcontents
  \printcontents{}{1}{\textbf{Contents of Appendix}\vskip3pt\hrule\vskip5pt}
  \vskip3pt\hrule\vskip5pt
}

\makeatletter
\newcommand{\btriangle}{\mathpalette\btriangle@\relax}
\newcommand{\btriangle@}[2]{%
  \begingroup
  \sbox\z@{$\m@th#1\triangle$}%
  \makebox[\wd\z@]{%
    \raisebox{0.04\height}{%
      \resizebox{1.1\wd\z@}{0.96\ht\z@}{%
        $\m@th#1\blacktriangle$%
      }%
    }%
  }%
  \endgroup
}

\usepackage[textsize=tiny]{todonotes}

\icmltitlerunning{MetaMind: General and Cognitive World Models in Multi-Agent Systems by Meta-Theory of Mind}

\begin{document}

\twocolumn[
  \icmltitle{MetaMind: General and Cognitive World Models in Multi-Agent Systems \\ by Meta-Theory of Mind}


  \begin{icmlauthorlist}
    \icmlauthor{Lingyi Wang}{a1}
    \icmlauthor{Rashed Shelim}{a1}
    \icmlauthor{Walid Saad}{a1}
    \icmlauthor{Naren Ramakrishna}{a2}
  \end{icmlauthorlist}

  \icmlaffiliation{a1}{Department of Electrical and Computer Engineering, Virginia Tech, USA}
  \icmlaffiliation{a2}{Department of Computer Science, Virginia Tech, USA}

  \icmlcorrespondingauthor{Lingyi Wang}{lingyiwang@vt.edu}


  \vskip 0.3in
]



\printAffiliationsAndNotice{}  

\begin{abstract}
\vspace{-0.1cm}
A major challenge for world models in multi-agent systems is to understand interdependent agent dynamics, predict interactive multi-agent trajectories, and plan over long horizons with collective awareness, without centralized supervision or explicit communication.
In this paper, MetaMind, a general and cognitive world model for multi-agent systems that leverages a novel meta-theory of mind (Meta-ToM) framework, is proposed. Through MetaMind, each agent learns not only to predict and plan over its own beliefs, but also to inversely reason goals and beliefs from its own behavior trajectories. This self-reflective, bidirectional inference loop enables each agent to learn a metacognitive ability in a self-supervised manner. Then, MetaMind is shown to generalize the metacognitive ability from first-person to third-person through analogical reasoning. Thus, in multi-agent systems, each agent with MetaMind can actively reason about goals and beliefs of other agents from limited, observable behavior trajectories in a zero-shot manner, and then adapt to emergent collective intention without an explicit communication mechanism. Extended simulation results on diverse multi-agent tasks demonstrate that MetaMind can achieve superior task performance and outperform baselines in few-shot multi-agent generalization.
\end{abstract}

\vspace{-0.5cm}
\section{Introduction}
\vspace{-0.1cm}
World models endow intelligent agents with the human-like ability to jointly understand environment dynamics, predict future trajectories, and plan actions over an extended horizon \citep{ding2024understanding,hafner2025mastering,prasanna2024dreaming,wang2025world}. Particularly, by abstracting low-level, high-dimensional sensory data into compact state representations in a latent space, world models enable sample-efficient imagination and long-horizon reasoning through internal simulation rather than direct interactions with the environment \citep{vafa2024evaluating,sun2024learning}. These properties facilitate generalization ability, counterfactual reasoning, and long-term planning \citep{yang2025driving,wang2025disentangled}. Recent advances have demonstrated that world models can capture the underlying dynamics of complex domains, including visual control, open-world exploration, and physical reasoning by only learning from images or videos \citep{assran2025v,samsami2024mastering,wang2025world}. These advances enable a transition from reactive policies to model-based planning, thus providing a foundation for more general-purpose intelligence. 

\begin{figure*}[!t]
    \centering
    \includegraphics[width=0.9\linewidth]{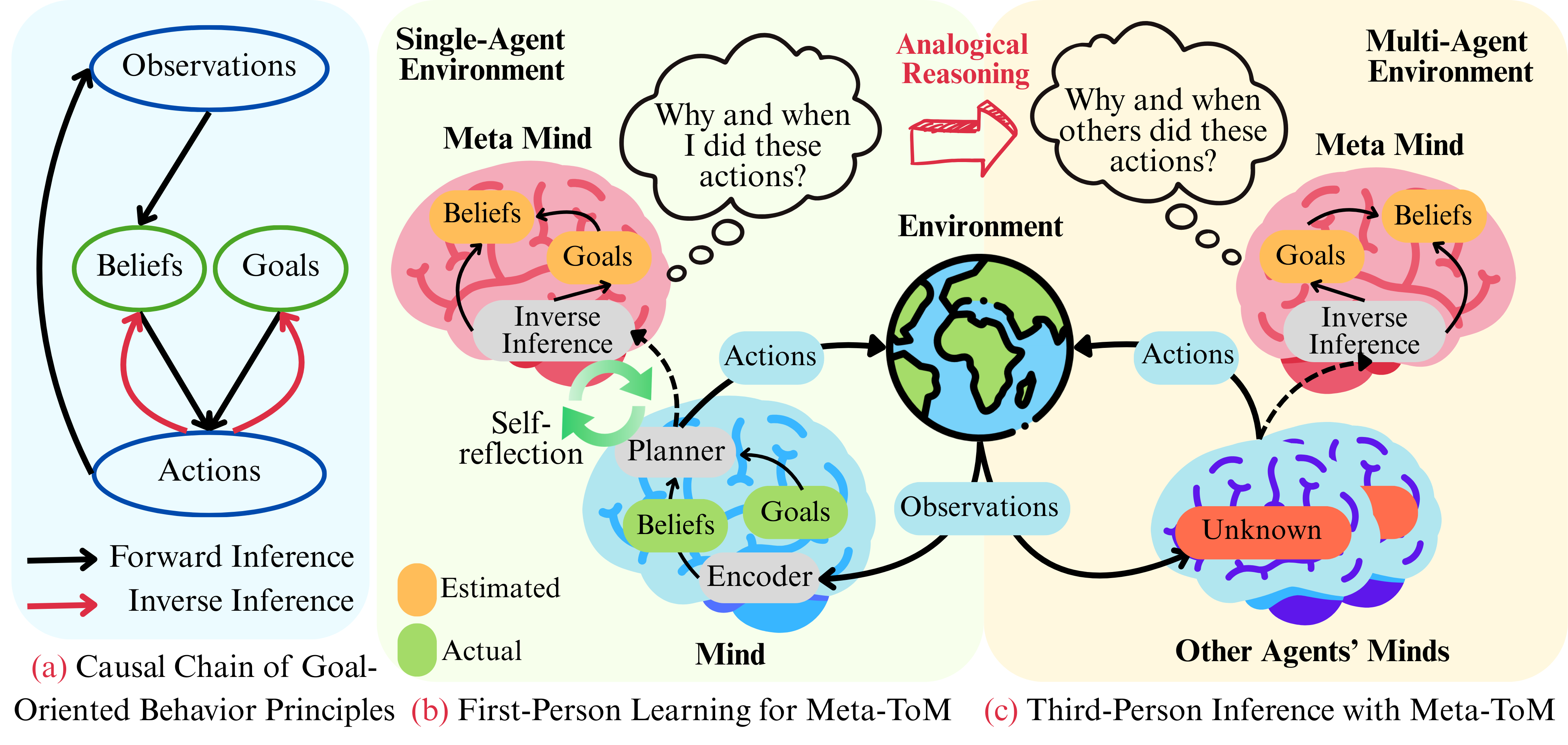}
    \vspace{-0.35cm}
    \caption{The proposed general and cognitive MetaMind in multi-agent systems.}
    \label{fig:sys}
    \vspace{-0.7cm}
\end{figure*}
\vspace{-0.15cm}
Despite the past success in single-agent systems, existing world models \citep{zhang2025revisiting,zhang2024decentralized,nomura2025decentralized} cannot support strategic planning in multi-agent systems due to non-stationary dynamics with adaptive behaviors, heterogeneous goals, and partial observations. Existing multi-agent world models can be divided into two categories. The first one is centralized training with decentralized execution (CTDE) models \citep{zhang2024decentralized} that rely on centralized representation aggregation to capture inter-agent dependencies during training. However, such a global context is unavailable under decentralized execution, which makes it hard to reuse the learned world model as an efficient test-time planner for long-horizon imagination. The second one is communication-based decentralized models \citep{nomura2025decentralized} that reduce partial observability via message exchange. However, messages are not directly available in counterfactual imagination and must be simulated, which increases inference cost and can exacerbate rollout drift. Hence, to enable world models for joint understanding, prediction, and long-horizon planning in multi-agent systems, we must overcome key challenges:
\begin{itemize}
  \vspace{-0.4cm}
    \item \emph{Understanding}: In multi-agent environments, the evolution of observations is shaped by \emph{coupled} interactions among multiple decision-makers, where each agent simultaneously acts on and reacts to the others. The same observation change can be consistent with different multi-agent explanations, thus making it fundamentally ambiguous to determine causal effects of each agent. Moreover, agents may have distinct goals, observe asymmetric information and follow heterogeneous policies \citep{zhang2024combo}, thus producing non-stationary, interdependent behavior that violates  the independence assumptions commonly used in standard world models. Hence, a major challenge is to understand \emph{agent-indexed, factorized} beliefs that disentangle the latent goals and beliefs of each participant while preserving their interaction structures. Such beliefs must attribute causal responsibility across agents through inverse inference rather than compressing everything in sensory data. In short, understanding requires structured, transferable abstractions that separate ``who acts,'' ``with which goal,'' and ``under which case,'' rather than a single-perspective dynamics model.
    
    \vspace{-0.2cm}
    \item \emph{Prediction}: Prediction in multi-agent systems is interactive, where each agent's future states are conditioned not only on the environment, but also on behavior of others under heterogeneous policies. This can lead to closed-loop distribution shift, long-horizon error compounding, and severe causal confounding when modeling trajectories from the perspective of a single agent. Such effects cannot be reliably handled by standard supervised dynamics predictors developed for single-agent world models. Hence, reliable prediction by world models in multi-agent systems requires cognitive inference over latent goals and beliefs, and modeling updates of other agents over multiple steps.
    
    \vspace{-0.2cm}
    \item \emph{Planning}: Planning is fundamentally strategic over joint action and belief spaces of multiple adaptive decision-makers. Rather than optimizing under a fixed environment, an agent must anticipate how others will interpret its behavior and adjust their responses for optimal planning. This suggests higher-order reasoning about goals and beliefs, e.g., “I know that you know that I know”, and more importantly, for maintaining and updating \emph{collective beliefs} over latent goals, private beliefs, and shared environment state as the interaction unfolds. Hence, it is challenging to achieve online long-horizon planning.
\end{itemize}

\vspace{-0.45cm}
To address these challenges, the world model at any given agent must treat other agents not as part of an unstructured background dynamics, but as \emph{intentional entities}, whose behavior is driven by goals and beliefs, and therefore admits causal, explainable principles as shown in Figure~\ref{fig:sys}(a). The theory-of-mind (ToM) provides this perspective by explicitly inferring others' beliefs and goals, thus forming a foundation for joint understanding, prediction, and planning in multi-agent systems \citep{10679907}. However, existing ToM-based approaches \citep{shum2019theory,westby2023collective,li2024long} typically rely on centralized or supervised training with specific partners and tasks, and learn a fixed perspective and structure of multi-agent interactions, which limits the generalization and cannot capture goal-conditioned causal mechanisms.

\vspace{-0.15cm}
The main contribution of this paper is \textit{\textbf{MetaMind}}, a general, cognitive world model that learns from an ego agent's experience and generalizes to multi-agent systems in a zero-shot manner. At the core of MetaMind, we propose a self-supervised \emph{Meta-ToM} framework that leverages only observable behavioral trajectories, without relying on explicit communication. Meta-ToM enables MetaMind to: (a) perform self-reflection by inversely inferring beliefs and goals to ensure inference-consistency, and turn passive dynamics modeling into goal-directed causal modeling for disentangled understanding, (b) generalize the inverse inference ability to unseen agents through analogical reasoning, which we name ``\emph{metacognition ability}'', for more reliable, explicit multi-agent trajectory predictions, and (c) learn collective beliefs for long-term, adaptive planning. These features provide a decentralized learning approach without symbolic communication systems or centralized context. Extended simulations across SMAC tasks show that MetaMind achieves ~54\% higher win rate than exsiting decentralized multi-agent world models under limited training steps, and a 2.4-fold improvement in few-shot generalization than exsiting centralized multi-agent world models.

\vspace{-0.4cm}
\section{Related Works}
\vspace{-0.1cm}
\subsection{World Models in Multi-Agent Systems}
\vspace{-0.1cm}
Recent advances in multi-agent systems have increasingly explored the use of world models to improve sample efficiency, generalization, and planning. Some works, such as COMBO \citep{zhang2024combo} and MABL \citep{venugopal2023mabl}, introduce compositional or hierarchical structures to separate shared dynamics from agent-specific behavior for better representation learning. DIMA \citep{loo2025efficient} and DCT \citep{zhang2024decentralized} adopt generative modeling frameworks, including diffusion processes and transformer architectures, to capture the complex, high-dimensional interactions among agents. GAWM \citep{shi2025gawm} proposes centralized attention modules to address observation inconsistencies in partially observable environments, while GRAD \citep{liu2024grounded} incorporates symbolic reasoning layers to enhance decision interpretability. DCWMC \citep{nomura2025decentralized} infers multi-agent states with a shared symbolic communication system. However, these approaches often rely on centralized supervision, task-specific structures, or passive inference, and lack generalization and scalability, thus, they cannot enable joint understanding, prediction, and
planning as a human-like intelligence in scalable and heterogeneous multi-agent systems. 

\vspace{-0.2cm}
\subsection{Theory of Mind}
ToM \citep{baker2011bayesian,rabinowitz2018machine,jara2019theory} endows agents with the ability to infer others' latent mental states, including goals and beliefs. Early works \citep{oguntola2023theory,zhang2025autotom} framed ToM as supervised behavior prediction or inverse planning over latent rewards and policies. Building on this foundation, ToM approaches have been applied in inverse reinforcement learning (IRL) \citet{ng2000algorithms,wu2023multiagent,gelpi2025towards}, intent-aware multi-agent reinforcement learning (MARL) \citep{qi2018intent,wang2022tomc}, and human reward learning \citep{tian2021learning}, to enhance representation learning and improve coordination by integrating inferred goals or agent-level cognition. To increase generality and reduce reliance on prior knowledge, AutoToM \citep{zhang2025autotom} and MUMA-ToM \citep{shi2025muma} adopt self-supervised and multimodal inference strategies with robustness and adaptability across agents. Hypothetical minds \citep{cross2024hypothetical} and LLM-ToM \citep{li2023theory} use large language models for explicit hypothesis generation and mental state attribution in complex social reasoning. However, most of these works require supervised or centralized multi-agent training and are sensitive to strategic behavior under partial observability \citep{rabinowitz2018machine}. They also need priors in the multi-agent structure, e.g., the number, roles, and topology of agents, thus, they cannot zero-shot generalize to new partners, group sizes, or interaction graphs \citep{zhang2025overcoming}. Moreovber, to the best of our knowledge, explicit ToM-style inference has not yet been used in world models for model-based planning.

\vspace{-0.3cm}
\section{Preliminaries}
\vspace{-0.1cm}
\subsection{Temporal Difference-Based World Model} 
\vspace{-0.1cm}
While the proposed Meta-ToM framework can apply to a broad range of world models, we use single-agent TD-MPC2, i.e., a temporal-difference (TD)-based learning approach with model predictive control (MPC), as a concrete framework to construct MetaMind for multi-agent systems. Given observation $\boldsymbol{o}_{\mathrm{t}}$ of the agent in single-agent settings, the TD-based world model \citep{hansen2023td} is given by
\vspace{-0.2cm}
\begin{equation}
\begin{aligned}
&\text{Belief Encoder: } \quad \boldsymbol{b}_t=\mathcal{B}_\varphi(\boldsymbol{o}_t,\boldsymbol{g}) \\
&\text{Forward Dynamics: } \quad \tilde{ \boldsymbol{b}}_{t+1}=\mathcal{D}_\varphi(\boldsymbol{b}_t,\boldsymbol{a}_t,\boldsymbol{g})\\
&\text{Reward Predictor: }  \quad \tilde r_t=\mathcal{R}_\varphi(\boldsymbol{b}_t,\boldsymbol{a}_t,\boldsymbol{g}) \\
&\text{Terminal Value: }    \quad \tilde q_t=\mathcal{Q}_\varphi(\boldsymbol{b}_t,\boldsymbol{a}_t,\boldsymbol{g})
\end{aligned}
\end{equation}
where $\boldsymbol{g}$ is a learnable goal vector, $\boldsymbol{b}_t$ is goal-oriented belief by $\mathcal{B}_{\varphi}$, $\boldsymbol{a}_t$ is an action from MPC, $\tilde{\boldsymbol{b}}_t$ is the predicted belief by $\mathcal{D}_{\varphi}$, $\tilde{r}_t$ is the predicted reward by $\mathcal{R}_{\varphi}$ at step $t$, and $\mathcal{Q}_{\varphi}$ estimates the discounted return with the parameter $\varphi$. We define a multi-step prediction loss:
\vspace{-0.3cm}
\begin{align}
\label{eq:main_loss}
\mathcal{L}_{0}(\varphi) & \!=\! \mathbb{E}_{\tau } \!\Big[ \sum_{t=0}^T \varsigma^t \big(\underbrace{\left\|\mathcal{D}_\varphi\left(\boldsymbol{b}_t, \boldsymbol{a}_t,\boldsymbol{g}\right) \!-\! \operatorname{sg}(\mathcal{B}_\varphi(\boldsymbol{o}_{t+1},\boldsymbol{g})) \right\|^2}_{\text{Imagination Loss}} \nonumber \\
&\qquad \qquad + \underbrace{\Im(\tilde{r}_t, r_t) + \Im(\tilde{q}_t, q_t)}_{\text{Reward and Value Loss}}\big)\Big],
\end{align}
where $\varsigma \in (0,1)$ is the time weighting,  $\tau = \{\boldsymbol{o}_t, \boldsymbol{a}_t, r_t, \boldsymbol{o}_{t+1}\}$ is a sampled actual trajectory, $\operatorname{sg}(\cdot)$ is the stop-gradient operation, and $q_t = r_t + \gamma Q_{\varpi}(\tilde{\boldsymbol{b}}_{t+1},  \pi_\varphi(\tilde{\boldsymbol{b}}_{t+1},\boldsymbol{g}))$ is the TD target at step $t$ with a factor $\gamma \in (0,1]$ and multi-Q exponential moving average (EMA) with parameters $\varpi$. To ensure scale-invariant learning and stable multi-goal learning, TD-MPC2 formulates the reward and value prediction as discrete regression in log-transformed space, where  $\Im(\cdot)$ is soft cross-entropy (More details can be seen in \underline{Appendix \ref{ap:multiq-ema} and \ref{ap:log}}).

\vspace{-0.2cm}
\subsection{Model Predictive Control} 
\vspace{-0.1cm}
We perform MPC in the differentiable belief space induced by $\mathcal{D}_{\varphi}$ as the planner of the world model. At each step, we optimize a Gaussian proposal $\mathcal{N}(\mu, \sigma^2)$ over candidate action sequences $\boldsymbol{a}_{t:t+\bar{H}}=\{\boldsymbol{a}_t, \boldsymbol{a}_{t+1},\cdots,\boldsymbol{a}_{t+\bar{H}}\}$ to maximize the expected cumulative return over horizon $\bar{H}$:
\vspace{-0.2cm}
\begin{align}
\left\{\mu^*, \sigma^* \right\} = \arg \max_{\mu, \sigma}  & \mathbb{E}_{\boldsymbol{a}_{t:t+\bar{H}}}  \left[\sum_{h=0}^{\bar{H}-1}  \gamma^h \mathcal{R}\left(\tilde{\boldsymbol{b}}_{t+h}, \boldsymbol{a}_{t+h},\boldsymbol{g}\right) \right. \nonumber \\
& \left.+\gamma^{\bar{H}} \mathcal{Q}\left(\tilde{\boldsymbol{b}}_{t+\bar{H}},\boldsymbol{a}_{t+\bar{H}},\boldsymbol{g}\right)\right].
\end{align}
After iterative optimization, the first action $\boldsymbol{a}_t$ is executed in the environment, while subsequent steps are warm-started by using $\pi_\varphi(\boldsymbol{b}_t, \boldsymbol{g})$ as initial mean $\mu_0$. 

\vspace{-0.3cm}
\section{Proposed MetaMind with Meta-ToM}
\vspace{-0.1cm}
As shown in Figure \ref{fig:sys}, MetaMind comprises two components: a goal-conditioned world model component for forward prediction and planning, and a Meta-ToM component for inverse inference. In this section, we will present the design of a novel, self-supervised Meta-ToM component for world models in multi-agent systems.

\paragraph{Inverse Inference.}
In multi-agent systems, an agent cannot rely solely on its own observations, as they are partial and their evolution is causally coupled through multi-agent interactions, while other agents' goals and beliefs remain hidden. To understand and learn the psychological principles with causal structures underlying behavior, as shown in Figure \ref{fig:sys}(a), we can perform inverse inference of latent \textit{mental states}, including both beliefs and goals, from externally observable behavioral trajectories. Let $\Psi_\theta$ be the inverse goal predictor and $\Omega_\theta$ be the inverse belief predictor with parameters $\theta$. Then, the estimated goal and belief can be, respectively, obtained by $\hat{\boldsymbol{g}}_{t}=\Psi_\theta(\boldsymbol{a}_{<t},\hat{\boldsymbol{g}}_{t-1})$ and $\hat{\boldsymbol{b}}_t= \Omega_\theta(\boldsymbol{a}_t,\hat{\boldsymbol{g}}_k)$. 
We optimize $\Psi_\theta$ and $\Omega_\theta$ by minimizing the loss of consistency of inverse reasoning:
\vspace{-0.2cm}
\begin{equation}
\begin{aligned}
\label{eq:inverse}
\mathcal{L}_{\text{inv}}(\theta) & = \mathbb{E}_{\tau } \Big[ \sum_{k=1}^T \| \Psi_\theta(\boldsymbol{a}_{<k},\hat{\boldsymbol{g}}_{k-1}) - \boldsymbol{g}  \|^2 \\
& \qquad \qquad \quad + \|\Omega_\theta(\boldsymbol{a}_{k},\mathrm{sg}(\hat{\boldsymbol{g}}_k))-\boldsymbol{b}_t\|^2\Big],
\end{aligned}
\end{equation}
where $\mathrm{sg}(\cdot)$ is the stop-gradient operator.
As illustrated in Figure \ref{fig:sys}(b), the inverse inference models $\Psi_\theta$ and $\Omega_\theta$ ground Meta-ToM by understanding potential mental states from its own behavioral trajectories, i.e., thinking ``\textit{Why and when I did these actions?}'' in a self-supervised manner. 

\paragraph{Self-Reflection.}
Since inverse inference from behavior is generally non-identifiable, i.e., multiple latent explanations for the same behavior, which can lead to ambiguous mental states. Hence, we introduce an explicit self-reflection by cycle-consistency to make the inverse inference behaviorally identifiable. Particularly, the agent verifies whether its predicted beliefs and goals can regenerate original behavioral trajectories, i.e., ``\textit{whether I will take the same action at the inferred belief and goal.}'', as illustrated in Figure \ref{fig:sys}(b). Given a behavioral trajectory $\boldsymbol{a}_{0:T}$, the latent mental states $\hat{\boldsymbol{b}}_t, \hat{\boldsymbol{g}}_t$ from the inverse models are passed to the forward policy model $\pi_\varphi$ and reconstruct actions with $\hat{\boldsymbol{a}}_t = \pi_\varphi(\hat{\boldsymbol{b}}_t, \hat{\boldsymbol{g}}_t)$. Hence, the belief-action cycle-consistency loss by minimizing the discrepancy between the predicted actions $\hat{\boldsymbol{a}}_t$ and original actions $\boldsymbol{a}_{0:T}$ is given by:
\vspace{-0.2cm}
\begin{equation}
\label{eq:loop}
\mathcal{L}_{\text{ref}}(\theta) =  \mathbb{E}_{\tau } \Big[\sum_{k=1}^T \left\| \pi_\varphi(\hat{\boldsymbol{b}}_{T-k}, \hat{\boldsymbol{g}}_{T-k}) - \boldsymbol{a}_{T-k} \right\|^2 \Big].
\end{equation}
The cycle consistency objective (\ref{eq:loop}) ensures that inferred mental states are not only plausible but also behaviorally sufficient. This reflective loop plays a dual role that serves both as a self-improvement signal for $\Psi_\theta$ and $\Omega_\theta$, and as a diagnostic tool for identifying mismatches among beliefs, goals, and actions, thus stabilizing inverse reasoning.

\paragraph{Analogical Inference.}
Unlike the internal, inaccessible beliefs and goals of each agent, actions are universally recognized, externally observable and protocol-agnostic across heterogeneous agents \citep{fudenberg1998theory,baker2009action,baker2017rational}. Hence, the Meta-ToM framework endows the agent with an analogical inference ability, which zero-shot generalizes the cognitive ability from first-person reasoning to third-person inference, i.e., from thinking ``Why and when did I perform these actions'' to thinking ``Why and when they did these actions''. Particularly, the learned inverse models $(\Psi^{i}_{\theta}, \Omega^{i}_{\theta})$ of agent $i$ from self-behavioral trajectories can be used for mental state estimation of other agents. Given an observable action trajectory $\{\boldsymbol{a}_t^{(i,j)}\}$ of agent $j$, agent $i$ can actively estimate the goals and beliefs of agent $j$, respectively, by analogical inference:
\begin{equation}
\label{eq:inv}
\hat{\boldsymbol{g}}_{t}^{(i,j)}:= \Psi^{i}_{\theta} \left(\boldsymbol{a}_{t}^{(i,j)}, \hat{\boldsymbol{g}}_{t-1}^{(i,j)}\right), \hat{\boldsymbol{b}}_t^{(i,j)}:=  \Omega^{i}_{\theta} \left(\boldsymbol{a}_t^{(i,j)}, \hat{\boldsymbol{g}}_t^{(i,j)}\right).
\end{equation}
In this way, the Meta-ToM enables explicit multi-agent modeling.
Based on estimated $\hat{\boldsymbol{g}}_{t}^{(i,j)}$ and $\hat{\boldsymbol{b}}_t^{(i,j)}$ from Meta-ToM, agent $i$ predicts beliefs and actions of agent $j$ with the forward models, respectively, by $\tilde{\boldsymbol{b}}^{(i,j)}_{t+1} = \mathcal{D}_{\varphi}(\hat{\boldsymbol{b}}_t^{(i,j)}, \boldsymbol{a}_t^{(i,j)}, \hat{\boldsymbol{g}}_{t}^{(i,j)})$ and $\tilde{\boldsymbol{a}}^{(i,j)}_{t+1} = \pi_{\varphi}(\hat{\boldsymbol{b}}^{(i,j)}_{t}, \hat{\boldsymbol{g}}_{t}^{(i,j)})$ for reliable imagination rollouts without centralized information in multi-agent systems. 

\vspace{-0.2cm}
\paragraph{Self-Supervised Collective Beliefs.}
In multi-agent environments, an ego agent must plan under uncertainty about others' latent beliefs and goals. We therefore construct a \emph{collective belief} that aggregates inferred mental states in a permutation-invariant and goal-conditioned manner, producing a compact interaction-aware latent representation. This representation summarizes \emph{who is around, what they likely intend, and how they may respond}, thus enabling scalable long-horizon planning.
Let $\mathcal{E}_{\varrho}^i(\cdot)$ be a belief fusion module of agent $i$ for collective beliefs with parameters $\varrho$. Given the actual belief $\boldsymbol{b}^i_{t+1}$ of agent $i$, we optimize $\mathcal{E}_{\varrho}(\cdot)$ by a self-supervised objective:
\vspace{-0.2cm}
\begin{equation}
\label{eq:selfsup_fusion}
\mathcal L_{\text{col}}(\varrho)
= \big\|\mathcal{E}_{\varrho}^i(\{\tilde{\boldsymbol{b}}_{t+1}^{(i,j)},\hat{\boldsymbol{g}}_{t}^{(i,j)}\})-\operatorname{sg}(\boldsymbol{b}^i_{t+1})\big\|_2^2.
\end{equation}
The loss function in (\ref{eq:selfsup_fusion}) provides a more accurate imagination of multi-agent interactions and enables emergent group awareness, thus enabling long-term planning without explicit communication or prior information. Moreover, any permutation-invariant module for belief fusion. Here, we apply the transformer-based aggregation approach \cite{zhang2024decentralized}, as shown in \underline{Appendix \ref{ap:agg}}. The training details of the proposed MetaMind with a Meta-ToM framework is summarized in \underline{Appendix \ref{ap:alg}}.

\section{Goal Identification by Meta-ToM}
In this section, we prove \emph{goal identification}, i.e., distinct goals induce distinguishable behavioral trajectories, for the effectiveness of the proposed Meta-ToM. In particular, goal identification guarantees that minimizing the behavioral residual leads to accurate estimations of goals.  Moreover, the representation-equivalence class is the invariant shared across agents. While the set of internal representations may vary, the set of possible tasks $\mathcal{T}$ is fixed. Formally, let $\Gamma$ be the group of representation transformations mapping heterogeneous encodings $\boldsymbol{g}^j$ to the same task $\kappa\in\mathcal{T}$. Goal identification is therefore proven at the task-equivalence level: for any two distinct tasks $\kappa_1$, $\kappa_2$ with encodings $\boldsymbol{g}_1$, $\boldsymbol{g}_2$, respectively, the induced behavioral distributions must be distinguishable, i.e.,
$
    p(\boldsymbol{a}_{t:t+H}\mid \boldsymbol{g}_1)\;\neq\; p(\boldsymbol{a}_{t:t+H}\mid \boldsymbol{g}_2),
$
while encodings within the same equivalence class $[\boldsymbol{g}]$ are treated as indistinguishable, and identification guarantees task-level intentions rather than agent-specific internal representations.
We omit the subscript parameters $\theta$, $\varphi$, and the agent index $i$ for brevity in the next discussions. 
\begin{definition}[Behavioral TD residual]
\label{def1}
Given an observation horizon $H \ge 2$ and a behavioral trajectory
$\boldsymbol{a}_{0:H-1}=(\boldsymbol{a}_0,\ldots,\boldsymbol{a}_{H-1})$.
For any candidate goal $\boldsymbol{g} \in\mathcal G$, we have $\boldsymbol{b}_t(\boldsymbol{g}):=\Omega(\boldsymbol{a}_t,\boldsymbol{g})$, $
\tilde{\boldsymbol{b}}_{t+1}(\boldsymbol{g}):=\mathcal{D}\big(\boldsymbol{b}_t(\boldsymbol{g}),\boldsymbol{a}_t,\boldsymbol{g}\big)$, and $
\tilde{\boldsymbol{a}}_{t+1}(\boldsymbol{g}):=\pi\big(\tilde{\boldsymbol{b}}_{t+1}(\boldsymbol{g}),\boldsymbol{g}\big)$.
Equivalently, let $F_{\boldsymbol{g}}(\boldsymbol{a}):=\pi\!\big(\mathcal{D}(\Omega(\boldsymbol{a},\boldsymbol{g}),\boldsymbol{a},\boldsymbol{g}),\,\boldsymbol{g}\big)$, and then $\tilde{\boldsymbol{a}}_{t+1}(\boldsymbol{g})=F_{\boldsymbol{g}}(\boldsymbol{a}_t)$.
The one-step TD residual and the stacked long-horizon residual are, respectively, defined by
\begin{equation}
  \begin{aligned}
&\phi_t(\boldsymbol{g}):=\boldsymbol{a}_{t+1} - \tilde{\boldsymbol{a}}_{t+1} (\boldsymbol{g})=\boldsymbol{a}_{t+1}-F_{\boldsymbol{g}}(\boldsymbol{a}_t), \\
&\Phi_H(\boldsymbol{g}):=\big[\phi_0(\boldsymbol{g});\ \phi_1(\boldsymbol{g});\ \ldots;\ \phi_{H-2}(\boldsymbol{g})\big].
\end{aligned}
\end{equation} 
We use $d_H(\boldsymbol{g}):=\|\Phi_H(\boldsymbol{g})\|_2$ 
as the behavioral cognition loss for goal identification. Since $\mathcal{B}$, $\Omega$ and $\pi$ are continuously differentiable, $\Phi_H(\boldsymbol{g})$ is differentiable in goal $\boldsymbol{g}$.
\end{definition}

\begin{definition}[Residual margin for discrete goal identification]
\label{def3}
Let $\mathcal C=\{\boldsymbol{g}^1,\ldots,\boldsymbol{g}^K\}\subset\mathcal G$ be a discrete codebook that records learned goal representations $\{\boldsymbol{g}^k\}$, and
let the true goal be $\boldsymbol{g}^*=\boldsymbol{g}^{k^*} \in \mathcal{C}$.
Given $\Phi_H(\boldsymbol{g})$ and $d_H(\boldsymbol{g}):=\|\Phi_H(\boldsymbol{g})\|_2$, the residual margin of $\mathcal{C}$ at horizon $H$ is
\begin{equation}
\bar\gamma_H(\mathcal C;\boldsymbol{g}^*)\ :=\ \min_{k\neq k^*}\ \big(\|\Phi_H(\boldsymbol{g}^k)-\Phi_H(\boldsymbol{g}^*)\|_2\big).
\end{equation}
\end{definition}

\begin{theorem}[Goal identification]
\label{thm:M2}
Let observable behavior trajectory $\boldsymbol{a}^j_{0:H-1}$ be generated by $\pi^j$ and $\mathcal{D}^j$ under the true goal $\boldsymbol{g}^*$. Assume $\pi(\boldsymbol{b},\boldsymbol{g})$ is $\xi_\pi$-Lipschitz in $\boldsymbol{b}$ and $\mathcal{D}(\cdot,\boldsymbol{a}^j,\boldsymbol{g})$ is $\xi_{\mathcal{D}}$-Lipschitz in $\boldsymbol{b}$. The per-step mismatches are given by
$\Delta \boldsymbol{b}^j_t:=\Omega(\boldsymbol{a}_t,\boldsymbol{g}^*)-\boldsymbol{b}_t^j$, $\Delta \mathcal{D}_t(\boldsymbol{b}):=\mathcal{D}(\boldsymbol{b},\boldsymbol{a}_t,\boldsymbol{g}^*)-\mathcal{D}^j(\boldsymbol{b},\boldsymbol{a}_t,\boldsymbol{g}^*)$, and $
\Delta\pi_{t+1}:=\pi^j(\boldsymbol{b}_{t+1}^j,\boldsymbol{g}^*)-\pi(\boldsymbol{b}_{t+1}^j,\boldsymbol{g}^*)$.
The behavioral-mismatch energy is:
\begin{equation}
\delta_{\mathrm{act}}
:=\left(\sum_{t=0}^{H-2}\Big(\,\|\Delta\pi_{t+1}\|
+\xi_\pi \|\varepsilon_t\|\Big)^2\right)^{1/2},
\end{equation}
where $\varepsilon_t:= \|\Delta \mathcal{D}_t(\boldsymbol{b}_t^j)\|+\xi_{\mathcal{D}}\|\Delta \boldsymbol{b}^j_t\|$.
If $\delta_{\mathrm{act}}<\bar\gamma_H(\mathcal C;\boldsymbol{g}^*)/2$, then the nearest-neighbor $\hat{k}=\arg\min_{k}\|\Phi_H(\boldsymbol{g}^k)\|_2$ returns
$\hat{\boldsymbol{g}}=\boldsymbol{g}^{\hat{k}}=\boldsymbol{g}^*$.
\end{theorem}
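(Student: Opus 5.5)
The argument has two parts: a residual bound at the true goal, and a triangle-inequality separation step. First we show $d_H(\boldsymbol{g}^*)=\|\Phi_H(\boldsymbol{g}^*)\|_2\le\delta_{\mathrm{act}}$. Then, for every $k\neq k^*$, the reverse triangle inequality and the margin of Definition~\ref{def3} give
\begin{equation*}
d_H(\boldsymbol{g}^k)\;\ge\;\|\Phi_H(\boldsymbol{g}^k)-\Phi_H(\boldsymbol{g}^*)\|_2-\|\Phi_H(\boldsymbol{g}^*)\|_2\;\ge\;\bar\gamma_H(\mathcal C;\boldsymbol{g}^*)-\delta_{\mathrm{act}}.
\end{equation*}
Under the hypothesis $\delta_{\mathrm{act}}<\bar\gamma_H/2$ we get $\bar\gamma_H-\delta_{\mathrm{act}}>\delta_{\mathrm{act}}\ge d_H(\boldsymbol{g}^*)$. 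So $\boldsymbol{g}^*$ is the unique minimizer of $d_H$ over $\mathcal C$, and $\hat k=k^*$. The minimizer is unique because the inequality is strict.

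The main work is the one-step bound at $\boldsymbol{g}^*$. By assumption the data satisfy $\boldsymbol{b}^j_{t+1}=\mathcal{D}^j(\boldsymbol{b}^j_t,\boldsymbol{a}_t,\boldsymbol{g}^*)$ and $\boldsymbol{a}_{t+1}=\pi^j(\boldsymbol{b}^j_{t+1},\boldsymbol{g}^*)$. This is how we read ``generated by $\pi^j$ and $\mathcal{D}^j$''; if the generation is stochastic we would work with the realized beliefs. Write $\hat{\boldsymbol{b}}_t=\Omega(\boldsymbol{a}_t,\boldsymbol{g}^*)=\boldsymbol{b}^j_t+\Delta\boldsymbol{b}^j_t$ and $\tilde{\boldsymbol{b}}_{t+1}=\mathcal{D}(\hat{\boldsymbol{b}}_t,\boldsymbol{a}_t,\boldsymbol{g}^*)$. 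Add and subtract $\mathcal{D}(\boldsymbol{b}^j_t,\boldsymbol{a}_t,\boldsymbol{g}^*)$ and use the $\xi_{\mathcal D}$-Lipschitz property:
\begin{equation*}
\|\tilde{\boldsymbol{b}}_{t+1}-\boldsymbol{b}^j_{t+1}\|\le \xi_{\mathcal D}\|\Delta\boldsymbol{b}^j_t\|+\|\Delta\mathcal{D}_t(\boldsymbol{b}^j_t)\|=\varepsilon_t .
\end{equation*}
Next, split the action residual by adding and subtracting $\pi(\boldsymbol{b}^j_{t+1},\boldsymbol{g}^*)$:
\begin{equation*}
\phi_t(\boldsymbol{g}^*)=\big[\pi^j(\boldsymbol{b}^j_{t+1},\boldsymbol{g}^*)-\pi(\boldsymbol{b}^j_{t+1},\boldsymbol{g}^*)\big]+\big[\pi(\boldsymbol{b}^j_{t+1},\boldsymbol{g}^*)-\pi(\tilde{\boldsymbol{b}}_{t+1},\boldsymbol{g}^*)\big].
\end{equation*}
The $\xi_\pi$-Lipschitz property then gives $\|\phi_t(\boldsymbol{g}^*)\|\le\|\Delta\pi_{t+1}\|+\xi_\pi\varepsilon_t$. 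Since $\varepsilon_t$ is already a scalar norm, $\|\varepsilon_t\|=\varepsilon_t$.

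Squaring and summing over $t=0,\dots,H-2$ gives $\|\Phi_H(\boldsymbol{g}^*)\|_2^2=\sum_t\|\phi_t(\boldsymbol{g}^*)\|^2\le\delta_{\mathrm{act}}^2$, which is the first part. Combined with the separation step, this proves the theorem.

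The step needing most care is matching the definitions. The residual in Definition~\ref{def1} uses $F_{\boldsymbol{g}}(\boldsymbol{a}_t)$, with $\Omega$ applied to the single action $\boldsymbol{a}_t$. The mismatch $\Delta\boldsymbol{b}^j_t$ has to be defined against exactly that same input, and $\Delta\mathcal{D}_t$ must be evaluated at the true belief $\boldsymbol{b}^j_t$, so that the telescoping above is legitimate. The theorem's definitions are written this way, so the argument goes through.
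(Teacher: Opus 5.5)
Your proof is correct and follows the paper's two-step route. The first step is the add-and-subtract Lipschitz decomposition giving $\|\phi_t(\boldsymbol{g}^*)\|\le\|\Delta\pi_{t+1}\|+\xi_\pi\varepsilon_t$, summed to $d_H(\boldsymbol{g}^*)\le\delta_{\mathrm{act}}$; you apply $\xi_\pi$ once to the combined belief error $\|\tilde{\boldsymbol{b}}_{t+1}-\boldsymbol{b}^j_{t+1}\|\le\varepsilon_t$, whereas the paper applies it separately to the dynamics and belief terms. The second step is a triangle-inequality separation, and here your version $d_H(\boldsymbol{g}^k)\ge\bar\gamma_H-\delta_{\mathrm{act}}>\delta_{\mathrm{act}}\ge d_H(\boldsymbol{g}^*)$ is cleaner than the paper's: the paper detours through $\gamma_H:=\min_{k\neq k^*}\|\Phi_H(\boldsymbol{g}^k)\|_2$, switches to the hypothesis $\delta_{\mathrm{act}}<\gamma_H/2$, and asserts $d_H(\boldsymbol{g}^k)\ge d_H(\boldsymbol{g}^*)+\bar\gamma_H$, which does not follow, so your argument is the one that actually uses the stated condition $\delta_{\mathrm{act}}<\bar\gamma_H/2$.
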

\begin{proof}
    Please see \underline{Appendix \ref{proof:a2}}.
\end{proof}

\begin{lemma}[Local linear margin via Lipschitz Jacobian]
\label{lemma:local-margin}
Assume $J_\Phi$ is $L_J$-Lipschitz in $\boldsymbol{g}$ on a neighborhood of $\boldsymbol{g}^*$ and every $\boldsymbol{g}^k$ lies in the locality ball
$\|\boldsymbol{g}^k-\boldsymbol{g}^{k^*}\|\le \phi_\star$ with $\phi_\star:=\sigma_{\min}\!\big(J_\Phi(\boldsymbol{g}^*)\big)/L_J$ and $\rho_{\min}:=\min_{k\neq \ell}\|\boldsymbol{g}^k-\boldsymbol{g}^\ell\|$, where $\sigma_{\min}$ is the smallest singular value. Then, for each $k\neq k^*$,
\begin{equation}
  \begin{aligned}
&\|\Phi_H(\boldsymbol{g}^k)\|_2\ \ge\ \frac12\,\sigma_{\min}\!\big(J_\Phi(\boldsymbol{g}^*)\big)\,\|\boldsymbol{g}^k-\boldsymbol{g}^{k^*}\|,
\\
& \quad \Rightarrow
\gamma_H(\mathcal C;\boldsymbol{g}^*)\ \ge\ \frac12\,\sigma_{\min}\!\big(J_\Phi(\boldsymbol{g}^*)\big)\ \rho_{\min}.
\end{aligned}
\end{equation}
\end{lemma}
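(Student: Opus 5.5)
The plan is a first-order Taylor expansion of the stacked residual $\Phi_H$ around the true goal $\boldsymbol{g}^*=\boldsymbol{g}^{k^*}$. The remainder is controlled by the Lipschitz constant $L_J$ of the Jacobian $J_\Phi$, and the locality radius $\phi_\star=\sigma_{\min}(J_\Phi(\boldsymbol{g}^*))/L_J$ is exactly what keeps the remainder below half of the linear term. Throughout I write $\sigma:=\sigma_{\min}(J_\Phi(\boldsymbol{g}^*))$ and assume $\sigma>0$, so that $J_\Phi(\boldsymbol{g}^*)$ has full column rank. This is implicit in the statement, since otherwise $\phi_\star=0$.

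\textbf{Step 1 (integral Taylor bound).} Fix $k\neq k^*$ and set $\Delta:=\boldsymbol{g}^k-\boldsymbol{g}^*$. I assume the segment $[\boldsymbol{g}^*,\boldsymbol{g}^k]$ lies in the neighborhood where $J_\Phi$ is $L_J$-Lipschitz, e.g.\ by taking that neighborhood to contain the ball of radius $\phi_\star$. By the fundamental theorem of calculus, which applies because $\Phi_H$ is differentiable (Definition~\ref{def1}),
\begin{equation*}
\Phi_H(\boldsymbol{g}^k)-\Phi_H(\boldsymbol{g}^*) = J_\Phi(\boldsymbol{g}^*)\Delta + \int_0^1\big(J_\Phi(\boldsymbol{g}^*+s\Delta)-J_\Phi(\boldsymbol{g}^*)\big)\Delta\,ds .
\end{equation*}
The Lipschitz property bounds the integrand by $L_J s\|\Delta\|^2$, so the remainder has norm at most $\tfrac{L_J}{2}\|\Delta\|^2$.

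\textbf{Step 2 (lower bound).} Since $\|J_\Phi(\boldsymbol{g}^*)\Delta\|\ge\sigma\|\Delta\|$, the reverse triangle inequality gives
\begin{equation*}
\|\Phi_H(\boldsymbol{g}^k)-\Phi_H(\boldsymbol{g}^*)\|\ \ge\ \sigma\|\Delta\|-\tfrac{L_J}{2}\|\Delta\|^2 .
\end{equation*}
Because $\|\Delta\|\le\phi_\star=\sigma/L_J$, we have $\tfrac{L_J}{2}\|\Delta\|^2\le\tfrac{\sigma}{2}\|\Delta\|$, and hence the difference is at least $\tfrac12\sigma\|\Delta\|$.

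To turn this into the first displayed inequality, about $\|\Phi_H(\boldsymbol{g}^k)\|_2$ itself, I would use the realizable/noiseless setting in which the true goal reproduces the trajectory exactly. This is the regime $\delta_{\mathrm{act}}=0$ of Theorem~\ref{thm:M2}, where $\Phi_H(\boldsymbol{g}^*)=0$. In the general case the statement should be read for the difference $\Phi_H(\boldsymbol{g}^k)-\Phi_H(\boldsymbol{g}^*)$, which is the quantity entering the margin of Definition~\ref{def3} anyway.

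\textbf{Step 3 (margin).} The codebook satisfies $\|\boldsymbol{g}^k-\boldsymbol{g}^{k^*}\|\ge\rho_{\min}$ for every $k\neq k^*$. Taking the minimum over $k\neq k^*$ in Step 2 therefore gives $\bar\gamma_H(\mathcal C;\boldsymbol{g}^*)\ge\tfrac12\sigma\rho_{\min}$, which is the claimed margin bound. Combined with Theorem~\ref{thm:M2}, this yields the sufficient condition $\delta_{\mathrm{act}}<\tfrac14\sigma\rho_{\min}$ for correct identification.

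The main obstacle is not the calculation but the hypotheses. One issue is the status of $\Phi_H(\boldsymbol{g}^*)$: the first inequality needs $\Phi_H(\boldsymbol{g}^*)=0$, or else it must be rephrased as a bound on the difference. The other is ensuring that the whole segment stays in the region where $J_\Phi$ is Lipschitz. I would state both explicitly at the start of the proof, and then Steps 1–3 are routine.
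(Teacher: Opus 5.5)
Your proposal is correct and follows essentially the same route as the paper: the integral form of the mean-value theorem, the Lipschitz bound $\tfrac{L_J}{2}\|\Delta\|$ on how far the averaged Jacobian drifts from $J_\Phi(\boldsymbol{g}^*)$, and the radius $\phi_\star=\sigma/L_J$ to keep half of $\sigma$. The paper phrases the last step via Weyl's inequality, $\sigma_{\min}(J_{\rm avg})\ge\sigma-\tfrac{L_J}{2}\|\Delta\|$, instead of your reverse triangle inequality, which gives the identical bound. It also simply asserts $\Phi_H(\boldsymbol{g}^{k^*})=0$ to pass from the difference to $\|\Phi_H(\boldsymbol{g}^k)\|_2$, so your explicit realizability caveat, and your remark that otherwise only the $\bar\gamma_H$ version holds, names exactly the assumption the paper's argument leaves unstated.
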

\begin{proof}
    Please see \underline{Appendix \ref{proof:l1}}.
\end{proof}

\begin{figure*}[ht!]
    \centering
    \includegraphics[width=1\linewidth]{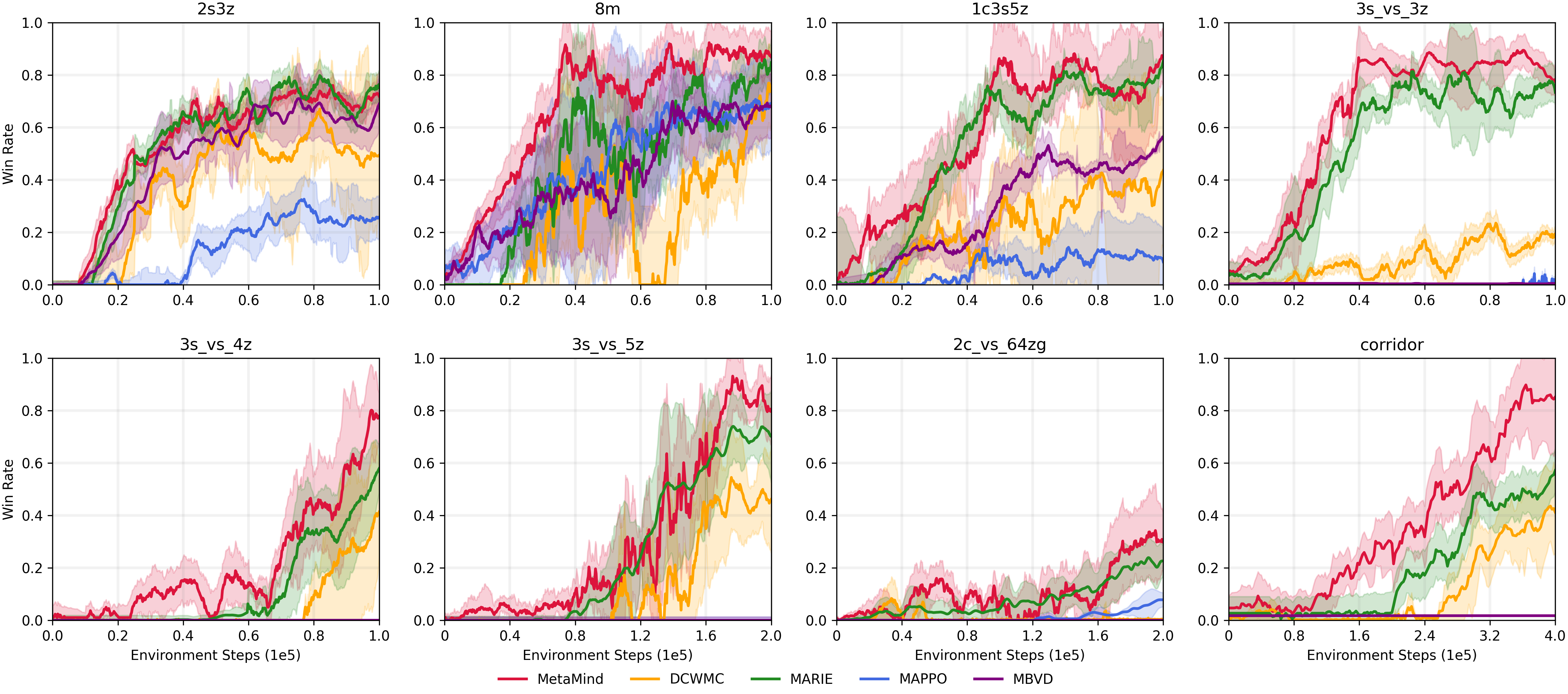}
    \vspace{-0.5cm}
    \caption{Performance comparison on 8 SMAC tasks under limited environment steps. The win rates are averaged over 100 test episodes.}
    \vspace{-0.5cm}
    \label{fig:es8}
\end{figure*}

\begin{theorem}[Sufficient horizon size for goal identification]
\label{thm:a2}
If there exist $m$ and $\beta>0$ such that for every window $[k,k+m-1]\subseteq[0,H-2]$, discrete-time sliding-window persistence of excitation is
\begin{equation}
\label{eq:PE}
\sum_{t=k}^{k+m-1}\Big(\frac{\partial \phi_t}{\partial \boldsymbol{g}}(\boldsymbol{g}^*)\Big)^{\!\top}
\Big(\frac{\partial \phi_t}{\partial \boldsymbol{g}}(\boldsymbol{g}^*)\Big)\ \succeq\ \beta\, I_{d_{\boldsymbol{g}}}.
\end{equation}
Then, with codebook separation $\rho_{\min}>0$, the sufficient horizon size $H$ for robust goal identification is:
\begin{equation}
H\ \ge\ 1\;+\;m\Big(\big\lfloor L\big\rfloor+1\Big),\qquad
L\ :=\ \Big(\frac{4\,\delta_{\mathrm{act}}}{\rho_{\min}\sqrt{\beta}}\Big)^{\!2}.
\end{equation}
\end{theorem}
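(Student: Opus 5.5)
We reduce the claim to Theorem~\ref{thm:M2}. It suffices to show that, once $H$ satisfies the stated bound, the residual margin satisfies $\bar\gamma_H(\mathcal C;\boldsymbol{g}^*)>2\delta_{\mathrm{act}}$. Theorem~\ref{thm:M2} then gives $\hat{\boldsymbol{g}}=\boldsymbol{g}^*$. To lower bound the margin, we follow the route of Lemma~\ref{lemma:local-margin}, with $\sigma_{\min}(J_\Phi(\boldsymbol{g}^*))$ controlled by the persistence of excitation condition (\ref{eq:PE}) rather than assumed.

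\textbf{Step 1: Gramian accumulation.} The stacked Jacobian is $J_\Phi=[\partial\phi_0/\partial\boldsymbol{g};\ldots;\partial\phi_{H-2}/\partial\boldsymbol{g}]$, so
\begin{equation*}
J_\Phi^\top J_\Phi=\sum_{t=0}^{H-2}\Big(\frac{\partial\phi_t}{\partial\boldsymbol{g}}\Big)^{\top}\frac{\partial\phi_t}{\partial\boldsymbol{g}}.
\end{equation*}
The index set $[0,H-2]$ has $H-1$ elements. It therefore contains $N:=\lfloor (H-1)/m\rfloor$ disjoint windows of length $m$. Each window contributes at least $\beta I$ by (\ref{eq:PE}), and the leftover summands are positive semidefinite. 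Hence $J_\Phi^\top J_\Phi\succeq N\beta I$, which gives $\sigma_{\min}(J_\Phi(\boldsymbol{g}^*))\ge\sqrt{N\beta}$.

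\textbf{Step 2: Margin from the Jacobian.} We invoke Lemma~\ref{lemma:local-margin}, or rather its underlying argument applied to the difference $\Phi_H(\boldsymbol{g}^k)-\Phi_H(\boldsymbol{g}^*)$. Taylor's theorem with the $L_J$-Lipschitz Jacobian, inside the locality ball, gives
\begin{equation*}
\|\Phi_H(\boldsymbol{g}^k)-\Phi_H(\boldsymbol{g}^*)\|\ge \sigma_{\min}\|\Delta\boldsymbol{g}\|-\tfrac{L_J}{2}\|\Delta\boldsymbol{g}\|^2\ge\tfrac12\sigma_{\min}\|\Delta\boldsymbol{g}\|.
\end{equation*}
Taking the minimum over $k\neq k^*$ yields $\bar\gamma_H\ge\tfrac12\sqrt{N\beta}\,\rho_{\min}$.

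\textbf{Step 3: Solving for $H$.} The requirement $\tfrac12\sqrt{N\beta}\rho_{\min}>2\delta_{\mathrm{act}}$ is equivalent to $N>(4\delta_{\mathrm{act}}/(\rho_{\min}\sqrt\beta))^2=L$. This holds if $N\ge\lfloor L\rfloor+1$, which is guaranteed once $H-1\ge m(\lfloor L\rfloor+1)$. That is exactly the stated horizon condition.

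\textbf{Main obstacle.} The delicate point is the locality hypothesis of Lemma~\ref{lemma:local-margin}. The radius $\phi_\star=\sigma_{\min}/L_J$ grows with $N$. However, $L_J$ for the stacked map $\Phi_H$ also grows, roughly like $\sqrt{H}$ times a per-step constant. We would handle this by assuming, as in the lemma, that the codebook lies in the locality ball for the horizon considered. If more care is needed, we would use per-step Lipschitz constants, under which both quantities scale like $\sqrt{N}$, so the ratio stays bounded below.

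A second subtlety is that $\delta_{\mathrm{act}}$ itself depends on $H$ through its sum. We treat it as the fixed quantity for the given horizon, as in Theorem~\ref{thm:M2}. A uniform per-step mismatch would make the condition self-consistent, since $\delta_{\mathrm{act}}^2$ and $N\beta$ then both grow linearly in $H$.
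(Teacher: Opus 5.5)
Your proposal is correct and follows the same route as the paper. In both, disjoint windows give $J_\Phi^\top J_\Phi\succeq\lfloor (H-1)/m\rfloor\beta I$, the Lipschitz-Jacobian local-margin argument of Lemma~\ref{lemma:local-margin} turns this into a margin of at least $\tfrac12\rho_{\min}\sqrt{\lfloor (H-1)/m\rfloor\beta}$, and comparison with $2\delta_{\mathrm{act}}$ yields the stated horizon before Theorem~\ref{thm:M2} is invoked. Your one refinement is applying the argument to $\Phi_H(\boldsymbol{g}^k)-\Phi_H(\boldsymbol{g}^*)$, which bounds $\bar\gamma_H$ directly; the paper instead bounds $\gamma_H=\min_{k\neq k^*}\|\Phi_H(\boldsymbol{g}^k)\|$ using the identity $\Phi_H(\boldsymbol{g}^{k^*})=0$ from the lemma's proof and then treats that as a bound on $\bar\gamma_H$. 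Like the paper, you still need the lemma's locality and Lipschitz-Jacobian hypotheses, which the theorem statement leaves implicit.
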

\begin{proof}
    Please see \underline{Appendix \ref{proof:a3}}.
\end{proof}
\begin{remark}
    Theorems \ref{thm:M2}-\ref{thm:a2} and Lemma~\ref{lemma:local-margin} are applicable to both homogeneous agents with the same network parameters, and heterogeneous agents with different network parameters or structures.
\end{remark}
Theorem \ref{thm:M2} converts Meta-ToM from a heuristic to a principled method by proving goal identification. In particular, Theorem \ref{thm:M2} introduces a behavioral-mismatch energy $\delta_{\mathrm{act}}$ and proves correctness whenever a simple criterion holds, $\bar\gamma_H>2\,\delta_{\mathrm{act}}$. It justifies the self-reflection and cycle-consistency objective since they reduce $d_H(g^*)$ and enlarge the effective margin. Lemma~\ref{lemma:local-margin} links smoothness and codebook separation to finite-sample separability, and reveals that increasing $\sigma_{\min}(J_\Phi(\boldsymbol g^*))$ or $\rho_{\min}$ enlarges the margin and improves robustness to estimation errors. A sufficient horizon size for goal identification is proved in Theorem \ref{thm:a2}, which reveals larger behavioral mismatch $\delta_{\mathrm{act}}$ requires longer observation, and more informative trajectories by larger $\beta$ reduce the required horizon.
Theorem~\ref{thm:a2} shows \emph{when inverse inference should succeed with finite trajectories} and provides guidance for choosing the trajectory chunk length used by Meta-ToM in practice. In a nutshell, Theorems \ref{thm:M2}-\ref{thm:a2} and Lemma~\ref{lemma:local-margin} explain why inverse inference by Meta-ToM is effective, when zero-shot analogical inference will succeed, and how to ensure closed-loop reliability in multi-agent settings.

\section{Experiments}
\subsection{Experimental Settings}
We evaluate all methods across StarCraft Multi-Agent Challenge (SMAC) tasks, which requires coordination under partial observability with decentralized execution.
Following common practice, we use a diverse set of maps covering both homogeneous and heterogeneous team compositions. Unless otherwise specified, an episode terminates when the combat ends or when the environment reaches its default time limit.
We compare MetaMind against representative multi-agent world model methods, and CTDE multi-agent reinforcement learning baselines. In particular, the baselines include MARIE~\cite{zhang2024decentralized} as CTDE world models, DCWMC~\cite{nomura2025decentralized} as decentralized world models, and MAPPO~\cite{yu2022surprising} and MBVD~\cite{xu2022mingling} as CTDE reinforcement learning (RL) baselines.
All methods are trained with the same environment interaction budget and evaluated under the same protocol. Experiments are run on an NVIDIA GeForce RTX~4090 GPU with 24~GB~VRAM.
Detailed model architectures, hyperparameters, and task-specific settings are provided in \underline{Appendix \ref{hyp}}.

\begin{table*}[ht!]  
\centering
\caption{Performance comparison of win rate (Mean ± Std.) across 13 SMAC maps for different methods, averaged over 100 random seeds after 200k environment steps of training. The \texttt{corridor} task is trained for 400K steps, the \texttt{3s\_vs\_5z} and \texttt{2c\_vs\_64zg} tasks are trained for 200K steps, and all remaining tasks are trained for 100K steps.}
\label{tab:smac_main}
\renewcommand{\arraystretch}{1.15}
\setlength{\tabcolsep}{6pt}
\small 
\begin{tabular}{l c c c c c c c}
\toprule
Map &Type & \multicolumn{3}{c}{Multi-Agent World Model Methods} & \multicolumn{2}{c}{Multi-Agent RL Methods}\\
\cmidrule(lr){3-7}
& & 
\makecell{MetaMind\\(\textbf{Ours})} &
\makecell{DCWMC\\{\cite{nomura2025decentralized}}} &
\makecell{MARIE\\{\cite{zhang2024decentralized}}} &
\makecell{MAPPO\\{\cite{yu2022surprising}}} &
\makecell{MBVD\\{\cite{xu2022mingling}}} \\
\cmidrule(lr){3-7} 
 & & Decentralized & Decentralized & CTDE & CTDE & CTDE \\
\midrule

2m\_vs\_1z       & \multirow{10}{*}{Homog.} &
91.7 $\pm$ 6.2 & 75.1 $\pm$ 10.3 & \textbf{93.1 $\pm$ 3.3} & 84.6 $\pm$ 8.1  & 38.2 $\pm$ 19.1 \\

2s\_vs\_1sc      &   &
93.2 $\pm$ 5.4 & 80.4 $\pm$ 9.2 & 96.1 $\pm$ 7.5  & 93.1 $\pm$ 6.1  & 16.1 $\pm$ 14.2 \\

3m              &   &
95.5 $\pm$ 3.1 & 72.1 $\pm$ 10.7 & 96.2 $\pm$ 1.1 & 82.3 $\pm$ 11.9  & 72.1 $\pm$ 8.6 \\ 

8m              &   &
89.1 $\pm$ 7.7 & 70.5 $\pm$ 12.8 & 84.1 $\pm$ 7.2 & 69.4 $\pm$ 17.8  & 71.2 $\pm$ 11.2 \\ 

3s\_vs\_3z       &   &
95.1 $\pm$ 2.1 & 63.4 $\pm$ 9.0 & 97.3 $\pm$ 1.9 & 6.8 $\pm$ 3.2  & 1.0 $\pm$ 0.5 \\ 

3s\_vs\_4z       &  &
77.3 $\pm$ 9.3 & 41.3 $\pm$ 17.2 & 71.2 $\pm$ 8.5 & 0.0 $\pm$ 0.0  & 0.0 $\pm$ 0.0 \\

3s\_vs\_5z       &   &
80.4 $\pm$ 7.3 & 46.1 $\pm$ 12.3 & 72.4 $\pm$ 13.2 & 0.0 $\pm$ 0.0  & 0.0 $\pm$ 0.0 \\

2c\_vs\_64zg     &    &
29.8 $\pm$ 11.2 & 9.5 $\pm$ 4.5 & 23.5 $\pm$ 15.1 & 5.2 $\pm$ 10.3  & 0.0 $\pm$ 0.0 \\

baneling   & &
92.7 $\pm$ 5.1 & 54.7 $\pm$ 18.9 & 91.8 $\pm$ 6.8 & 31.3 $\pm$ 19.4  & 11.3 $\pm$ 13.2 \\

corridor          &   &
76.8 $\pm$ 14.2 & 41.2 $\pm$ 15.7 & 72.5 $\pm$ 16.1 & 0.0 $\pm$ 0.0 & 0.0 $\pm$ 0.0 \\  

\midrule

2s3z            & \multirow{3}{*}{Heterog.} &
84.6 $\pm$ 6.7 & 54.8 $\pm$ 11.3 & 82.3 $\pm$ 8.5 & 31.2 $\pm$ 14.2  & 42.7 $\pm$ 8.1 \\ 

1c3s5z           &  &
87.6 $\pm$ 8.3 & 58.6 $\pm$ 29.2 & 85.0 $\pm$ 9.4 & 18.4 $\pm$ 11.0   & 60.9 $\pm$ 11.4 \\

MMM              & &
85.7 $\pm$ 3.6 & 68.1 $\pm$ 10.3 & 83.2 $\pm$ 7.1 & 11.2 $\pm$ 3.5  & 10.5 $\pm$ 7.1 \\
\bottomrule
\end{tabular}
\end{table*}

\begin{figure*}[ht!]
    \centering
    \vspace{-0.1cm}
    \includegraphics[width=1\linewidth]{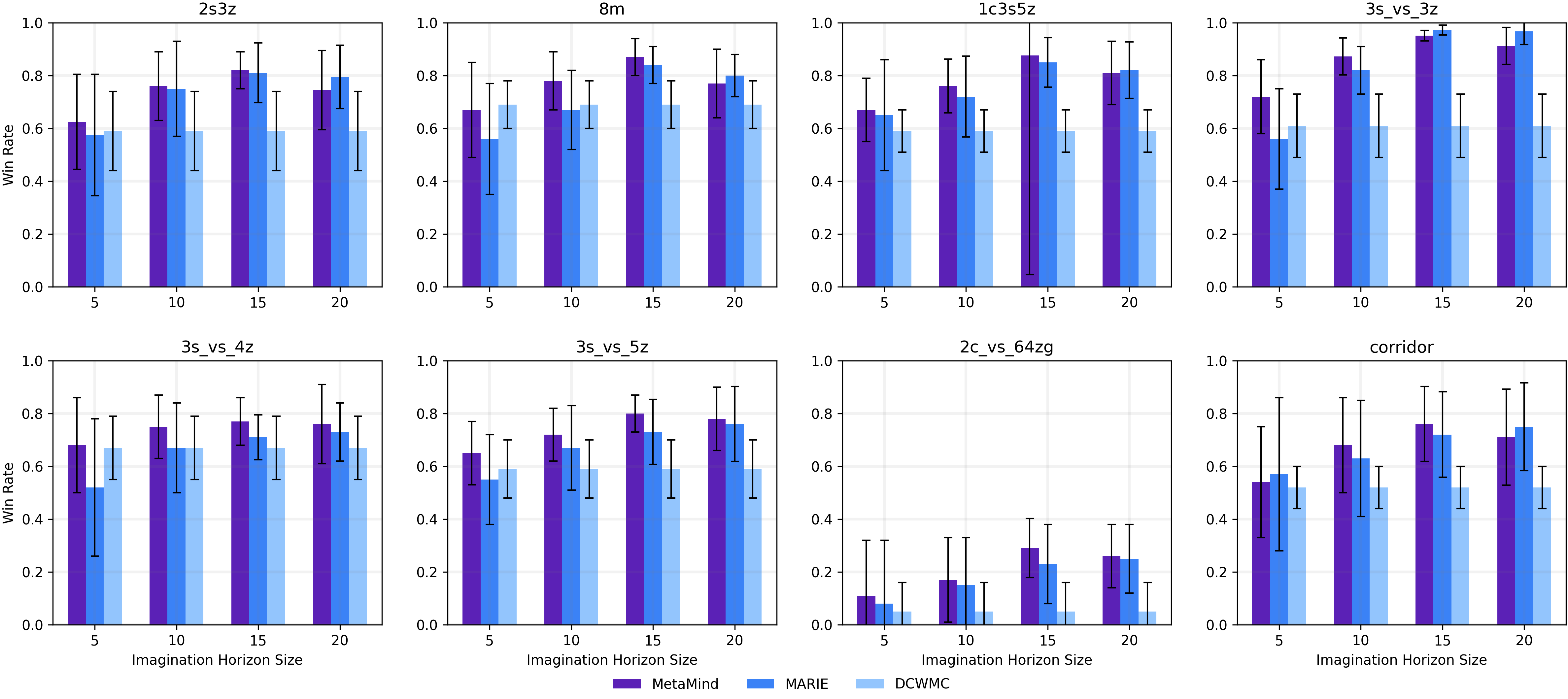}
    \vspace{-0.5cm}
    \caption{Performance comparison of win rate (Mean ± Std.) on 8 SMAC tasks versus different imagination horizon length.}
    \vspace{-0.4cm}
    \label{fig:H8}
\end{figure*}

\vspace{-0.1cm}
\subsection{Task Performance}
Figure~\ref{fig:es8} shows the convergence performance of different methods across 8 SMAC tasks versus limited environment steps, and Table \ref{tab:smac_main} lists the task performance across 13 SMAC tasks within limited environment steps. We observe that MetaMind achieves better convergence and win rates within limited environment steps. As shown in Figure~\ref{fig:es8}, MetaMind reaches 80\% win rate within 200K steps on {3s\_vs\_5z}, whereas MARIE and DCWMC can only reach 75\% and 52\% win rate, respectively. This is because that MetaMind enables explicit, decentralized teammate mental-state inference for a multi-agent model-based planner, which reduces the effective non-stationarity from reactive agents, thereby improving sample efficiency and reliable rollouts for long-horizon planning. This advantage becomes most salient on heterogeneous, strategy-intensive maps where successful play requires complex anticipatory coordination rather than simple reactive control. 
From Table \ref{tab:smac_main}, in \texttt{3s\_vs\_4z}, \texttt{3s\_vs\_5z}, and \texttt{corridor} tasks, MAPPO and MBVD cannot achieve non-trivial coordination, whereas MetaMind achieves 77.3$\pm$9.3\%, 80.4$\pm$7.3\%, and 76.8$\pm$14.2\%, respectively. Compared to MARIE and DCWMC, MetaMind achieves up to 11.0\% improvement in terms of win rate on \texttt{3s\_vs\_5z}.
Overall, by explicitly inferring latent mental states and aggregating them into a permutation-invariant interaction-aware collective belief for model-based planning, MetaMind achieves faster convergence and superior win rates, particularly in heterogeneous tasks requiring robust long-horizon coordination under partial observability.

\begin{figure*}[ht]
    \centering
    \includegraphics[width=0.95\linewidth]{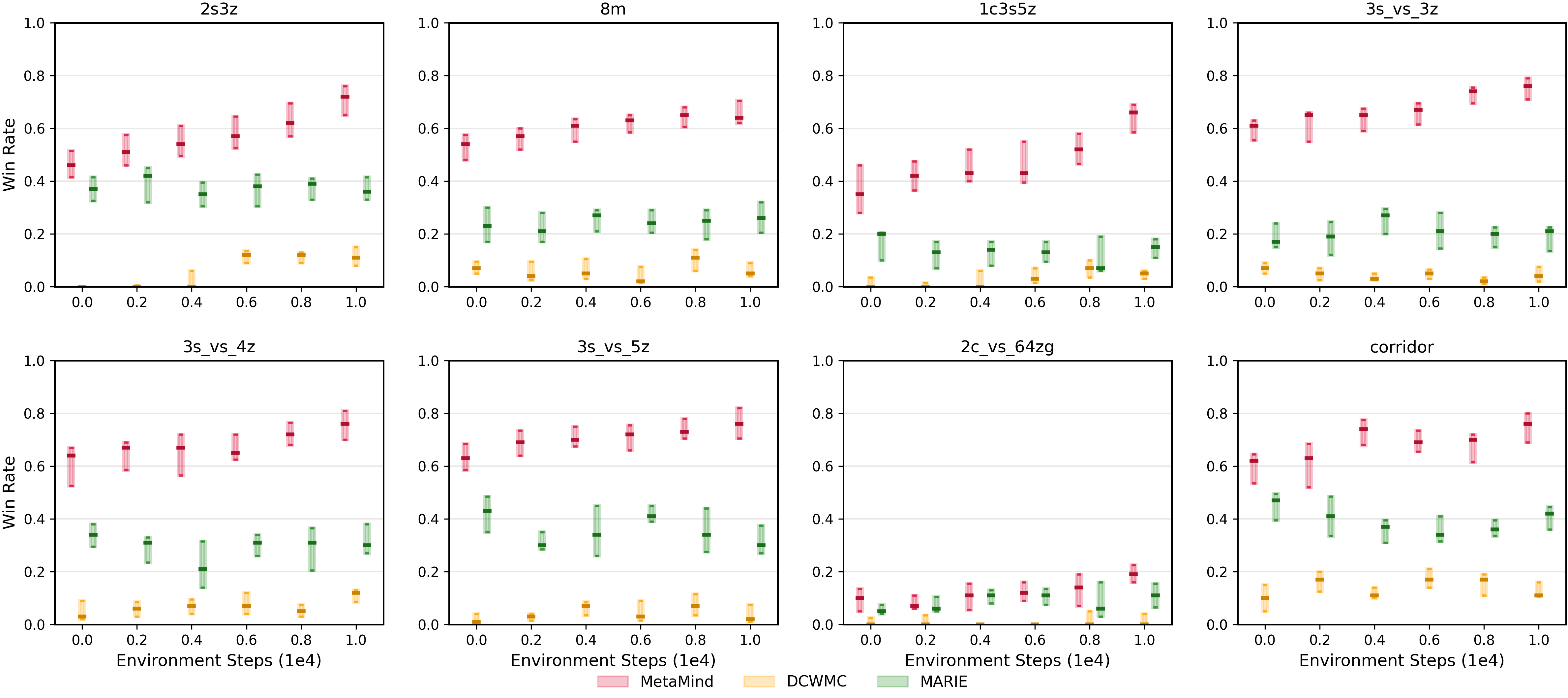}
    \vspace{-0.2cm}
    \caption{Performance comparison of win rate in few-shot multi-agent generalization on 8 SMAC tasks under limited environment steps.}
    \vspace{-0.3cm}
    \label{fig:G8}
\end{figure*}

\begin{figure}[ht]
    \centering
    \includegraphics[width=1\linewidth]{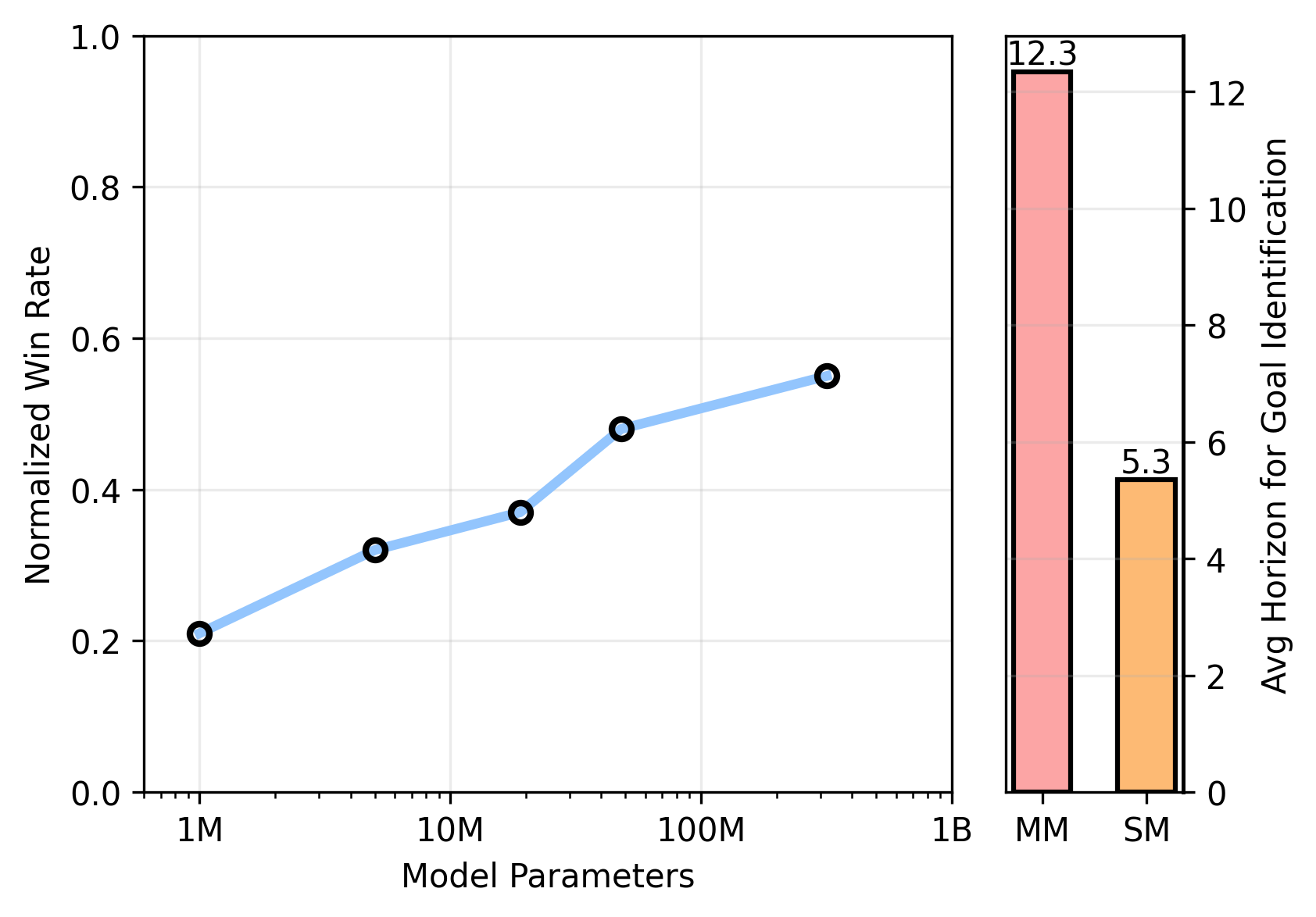}
    \vspace{-0.5cm}
    \caption{The scalability of a multi-map (MM) MetaMind to perform and cooperate over 13 SMAC maps, and the average horizon for goal identification of MT and single-map (SM) MetaMinds.}
    \vspace{-0.6cm}
    \label{fig:SCA}
\end{figure}

\subsection{Impact of Imagination Horizon}
\vspace{-0.15cm}
Figure \ref{fig:H8} studies how the imagination horizon affects multi-agent performance, with horizon size $H \in \{5,10,15,20\}$. DCWMC predicts only the next-step information and therefore is not affected by varying $H$. From Figure 3, a longer imagination horizon generally leads to higher win rates for MetaMind and MARIE, when $H$ increases from 5 to 15.
This improvement is significant for coordination-intensive tasks that require anticipating delayed tactical outcomes, where short-horizon rollouts tend to be short-sighted.
When $H$ reaches 20, the benefit becomes smaller, which is caused by compounding model errors and increased uncertainty in long-horizon imagined trajectories. MetaMind reaches the best win rates for $H \ge 10$ and shows a smaller performance reduction at $H{=}20$ than MARIE on long-horizon coordination tasks.
For example, on \texttt{2c\_vs\_64zg}, MetaMind achieves 43.0 \% at $H{=}15$, which outperforms MARIE by 4.6 \%, and achieves reliable performance when the horizon increases to $H{=}20$.
Figure \ref{fig:H8} suggests that the proposed meta-ToM and collective belief modeling can stabilize planning under partial observability when the imagination horizon becomes longer.
Based on these results, all remaining experiments use $H{=}15$ as the default horizon, which provides a balance between lookahead and rollout reliability.

\subsection{Multi-Agent Generalization}
In Figure \ref{fig:G8}, we evaluate multi-agent generalization with few environment interactions under teammate distribution shift. In particular, for each SMAC map, we first train a population of world models for agents. During testing, we form evaluation teams by randomly replacing one agent in a converged team with a world model drawn from this population, while keeping the remaining agents fixed. This creates a distribution shift in observable behaviors and coordination conventions without changing the environmental dynamics. This protocol aims to measure (a) teammate generalization, i.e., robustness to partner-policy changes; (b) coordination robustness, i.e., the ability to maintain effective collaboration when implicit coordination protocols differ; and (c) compositional generalization, i.e., whether agents trained under different runs can be composed into a functioning team. We propose a few-shot learning approach for MetaMind to fine-tune inverse inference, as given in \underline{Appendix \ref{sec:fewshot-inv-belief}}.

As shown in Figure \ref{fig:G8}, MetaMind exhibits stronger generalization under controlled partner-policy replacement. It achieves an average median win rate of 62.7\% across 8 SMAC maps, whereas MARIE reaches only 25.9\%. Moreover, MetaMind is robust in the lower tail, achieving a win rate of 59.4\% at the 25th percentile versus 23.0\% by MARIE. Compared to MARIE, MetaMind achieves an average 2.4-fold improvement in few-shot generalization. This indicates MetaMind has far fewer catastrophic coordination failures when implicit conventions shift. This is because: (a) analogical inference maps observed actions to latent goal/belief, thereby reducing reliance on a single co-adapted protocol, such as the emergent communication systems in DCWMC; (b) self-reflection regularizes inverse inference toward behaviorally identifiable explanations, thereby improving transfer to unseen partner policies; and (c) collective-belief aggregation absorbs local partner changes into an interaction-aware belief. Moreover, it is observed that the replacement of teammate breaks the sender-receiver codebook alignment of DCWMC, so messages become uninterpretable, thereby causing the communication channel failures and collapsed coordination performance.

\subsection{Scalability}
Figure~\ref{fig:SCA} shows that MetaMind scales with both model capacity and task diversity. As the parameter budget increases from 1M to 200M, the normalized win rate improves from 0.21 to 0.55. It indicates that larger models effectively translate additional capacity into better multi-task coordination rather than being bottlenecked by negative transfer. Moreover, the average inference cost of goal identification for MM MetaMind is 12.3, compared to 5.3 for SM MetaMind. This indicates that even under multi-map training, MetaMind can still accurately infer others' goals/beliefs and remain reliable across diverse maps with coordination. Such robust intent identification provides a stable basis for partner prediction and planning, which in turn supports strong cooperative performance as model capacity grows.

\section{Conclusion}
In this paper, we have proposed \emph{MetaMind}, a generalized cognitive world model with Meta-ToM for multi-agent systems, which learns from an ego agent's experience and generalizes to interactive environments in a zero-shot manner. We have developed Meta-ToM as a self-supervised inverse-inference paradigm that treats other agents as intentional entities and actively reasons latent beliefs and goals from behavioral trajectories. Meta-ToM combines self-reflection to learn goal-conditioned causal mechanisms, enables analogical inference to generalize mental-state estimation across agents, and constructs permutation-invariant collective beliefs for long-horizon planning under strategic adaptation. Moreover, we have provided theoretical guarantees for goal identification from behaviors, thereby offering principled conditions on separability and horizon length for robust inverse inference. Extended simulations across SMAC tasks show that MetaMind achieves ~54\% higher win rate than exsiting decentralized multi-agent world models under limited training steps, and a 2.4-fold improvement in few-shot generalization than exsiting centralized multi-agent world models.



\nocite{wu2020too,moreno2021neural,wang2018prefrontal,lee2022reasoning,li2026puzzle,wang2025leveraging,deihim2025transformer}

\nocite{kang2025viki,zhangad,wang2025rotate,wang2024making,zhang2024combo,hu2025agent2world,zhang2025revisiting,zhao2025learning,zhou2024dino,huang2025pigdreamer,wang2025dual}

\bibliography{example_paper}
\bibliographystyle{icml2026}

\newpage

\appendix
\onecolumn

\DoToC

\newpage

\section{Implementation Details}
\subsection{Multi-Q Network with EMA}
\label{ap:multiq-ema}
The use of a multi-Q Network with an exponential moving average (EMA) provides more conservative and stable value targets, which can effectively reduce overestimation bias and variance during temporal-difference learning \citep{hansen2023td}.
We maintain an ensemble of $N_Q$ parallel Q-functions $\{Q_{\varpi^{(i)}}\}_{i=1}^{N_Q}$, each with parameters $\varpi^{(i)}$ and EMA target copy $\bar{\varpi}^{(i)}$. The EMA targets are updated with momentum $\mu \in (0,1)$:
\begin{equation}
\bar{\varpi}^{(i)} \leftarrow \mu\, \bar{\varpi}^{(i)} + (1-\mu)\, \varpi^{(i)}, 
\qquad i=1,\dots,N_Q.
\label{eq:ema}
\end{equation}

At training step $t$, we form a conservative TD target by uniformly sampling a subset $\mathcal{M}_t \subset \{1,\dots,N_Q\}$ with $|\mathcal{M}_t|=2$, evaluating the EMA targets, and taking the minimum:
\begin{equation}
q_t 
= r_t 
+ \lambda\, \gamma \,
\min_{j \in \mathcal{M}_t} 
Q_{\bar{\varpi}^{(j)}}\!\left(
\tilde{\boldsymbol{b}}_{t+1},\;
\pi_{\varpi}\!\left(\tilde{\boldsymbol{b}}_{t+1}, \boldsymbol{g}\right)
\right),
\label{eq:td-target}
\end{equation}
where $\lambda \in (0,1]$ is a constant factor, $\gamma \in (0,1)$ is the discount, and $\tilde{\boldsymbol{b}}_{t+1}$ is the latent belief state predicted from $(\boldsymbol{b}_t,\boldsymbol{a}_t,\boldsymbol{g})$ by the model in the main body. The policy prior is $\tilde{\boldsymbol{a}}_{t+1}=\pi_{\varpi}(\boldsymbol{b}_t,\boldsymbol{g})$. This stochastic subsampling-and-minimization yields a conservative bootstrap that mitigates overestimation while remaining computationally efficient.
By using the TD target $q_t$ from (\ref{eq:td-target}), the per-network value loss is
\begin{equation}
\mathcal{L}_{Q}\!\left(\varpi^{(i)}\right) 
= \mathbb{E}_{\tau}\!\left[ 
\sum_{t=0}^{T} \zeta^{t}\;
\Im\!\left(\mathbf{s}^{(i)}_t,\; q_t\right)
\right],
\qquad i=1,\dots,N_Q,
\label{eq:per-q-loss}
\end{equation}
where $\zeta \in (0,1]$ is the temporal weighting factor, $\tau=\{\boldsymbol{o}_t,\boldsymbol{a}_t,r_t,\boldsymbol{o}_{t+1}\}$ is a sampled trajectory, and $\mathbf{s}^{(i)}_t$ are the logits produced by the $i$-th Q head at $(\boldsymbol{b}_t,\boldsymbol{a}_t,\boldsymbol{g})$. 

\subsection{Discrete Regression Objective in Log-Space.}
\label{ap:log}
Following the main body, both reward and value are trained via discrete regression \citep{hansen2023td} in a log-transformed space using a soft cross-entropy $\Im(\cdot)$ to ensure scale-invariant, multi-goal learning. Concretely, let the value head of the $i$-th Q-network output logits 
$\mathbf{s}^{(i)}_t \in \mathbb{R}^{K}$ over fixed log-space bins 
$\{b_k\}_{k=1}^{K}$. 
Let $\sigma(\cdot)$ denote the softmax and let $\mathbf{w}(y)\in\Delta^{K-1}$ be a soft target distribution induced by the scalar $y$, i.e., by linear interpolation between neighboring bins in log space. Then, the process can be given by:
\begin{align}
\Im\!\left(\mathbf{s}, y\right) = - \sum_{k=1}^{K} w_k(y)\, \log \sigma(\mathbf{s})_k, \quad
\tilde{y}(\mathbf{s}) = \sum_{k=1}^{K} \sigma(\mathbf{s})_k \, b_k,
\end{align}
where $\tilde{y}(\mathbf{s})$ is the differentiable expectation used for reporting or for auxiliary terms in the prediction loss (\ref{eq:main_loss}).

\subsection{Aggregation Approaches for Collective Belief}
\label{ap:agg}
We aggregate the unordered neighbor set by
\begin{equation}
\label{eq:st-init}
X^{(0)}=\{\boldsymbol{x}_{t}^{(i,j)}:\; j\in\mathcal{N}(i)\},
\qquad
\boldsymbol{x}_{t}^{(i,j)}=\big[\tilde{\boldsymbol{b}}_{t+1}^{(i,j)};\,\hat{\boldsymbol{g}}_{t}^{(i,j)}\big],
\end{equation}
by using $L$ permutation-invariant self-attention layers followed by an order-invariant pooling.
For each element $\boldsymbol{x}\in X^{(\ell)}$ and self-attention layer $\ell=0,\dots,L-1$, linear projections are defined by
\begin{equation}
Q_{\ell}(\boldsymbol{x})=\boldsymbol{x}W_{Q}^{(\ell)},\quad
K_{\ell}(\boldsymbol{x})=\boldsymbol{x}W_{K}^{(\ell)},\quad
V_{\ell}(\boldsymbol{x})=\boldsymbol{x}W_{V}^{(\ell)},
\end{equation}
where $W_{Q}^{(\ell)}$, $W_{K}^{(\ell)}$, and
$W_{V}^{(\ell)}$ are parameters.
The attention output for $\boldsymbol{x}$ is
\begin{equation}
\mathrm{Attn}_{\ell}(\boldsymbol{x},X^{(\ell)})
=\sum_{\boldsymbol{y}\in X^{(\ell)}}\alpha_{\ell}(\boldsymbol{x},\boldsymbol{y})\,V_{\ell}(\boldsymbol{y}),
\qquad
\alpha_{\ell}(\boldsymbol{x},\boldsymbol{y})
=\frac{\exp\!\big(Q_{\ell}(\boldsymbol{x})K_{\ell}(\boldsymbol{y})^{\!\top}/\sqrt{d_k}\big)}
{\sum_{\boldsymbol{z}\in X^{(\ell)}}\exp\!\big(Q_{\ell}(\boldsymbol{x})K_{\ell}(\boldsymbol{z})^{\!\top}/\sqrt{d_k}\big)},
\end{equation}
where $d_K$ represents the dimension of the key and query. 
Apply an output projection and a pointwise two-layer forward update:
\begin{equation}
\tilde{\boldsymbol{x}}
=\mathrm{Attn}_{\ell}(\boldsymbol{x},X^{(\ell)})\,W_{O}^{(\ell)},\qquad
\boldsymbol{h}
=\sigma\!\big(\tilde{\boldsymbol{x}}W_{1}^{(\ell)}+\boldsymbol{b}_{1}^{(\ell)}\big)W_{2}^{(\ell)}+\boldsymbol{b}_{2}^{(\ell)},
\end{equation}
with weights $W_{O}^{(\ell)}$, $W_{1}^{(\ell)}$,
$W_{2}^{(\ell)}$, biases $\boldsymbol{b}_{1}^{(\ell)}$,$\boldsymbol{b}_{2}^{(\ell)}$,
and nonlinearity $\sigma(\cdot)$.
Set
\begin{equation}
X^{(\ell+1)}=\{\boldsymbol{h}\,:\,\boldsymbol{x}\in X^{(\ell)}\}.
\end{equation}
Since attention weights and summations range over the entire set, this layer is permutation-invariant with respect to the ordering of $X^{(\ell)}$.
From the final set $X^{(L)}$, scalar scores and pooling weights are computed, respectively, by
\begin{equation}
s(\boldsymbol{x})=\boldsymbol{x}\boldsymbol{w}_{s}+c_{s},\qquad
\beta_{ij}=\frac{\exp\!\big(s(\boldsymbol{x}_{t}^{(i,j)})\big)}{\sum_{k\in\mathcal{N}(i)}\exp\!\big(s(\boldsymbol{x}_{t}^{(i,k)})\big)},
\end{equation}
where $\boldsymbol{w}_{s}\in\mathbb{R}^{d}$ and $c_{s}\in\mathbb{R}$.
The neighbor features are aggregated through a linear map
\begin{equation}
u(\boldsymbol{x})=\sigma(\boldsymbol{x}W_{u}+\boldsymbol{b}_{u}),\qquad
\boldsymbol{z}_{i}=\sum_{j\in\mathcal{N}(i)}\beta_{ij}\,u\!\big(\boldsymbol{x}_{t}^{(i,j)}\big),
\end{equation}
with parameters $W_{u}$ and $\boldsymbol{b}_{u}$.
Finally, the collective belief is projected by the pooled vector:
\begin{equation}
\label{eq:st-out}
\mathcal{E}_{\varrho}^{i}\!\big(\{\boldsymbol{x}_{t}^{(i,j)}\}\big)
=\sigma\!\Big(\,[\boldsymbol{x}_{t}^{(i,i)};\,\boldsymbol{z}_{i}]\,W_{o}+\boldsymbol{b}_{o}\Big),
\end{equation}
where parameters $W_{o}$ and $\boldsymbol{b}_{o}$.
By construction, the aggregation (\ref{eq:st-out}) is invariant to neighbor indices.
All weights and biases above are part of $\varrho$.

\subsection{Few-Shot Learning for Inverse Belief Inference}
\label{sec:fewshot-inv-belief}
\begin{figure}[h!]
    \centering
    \includegraphics[width=0.5\linewidth]{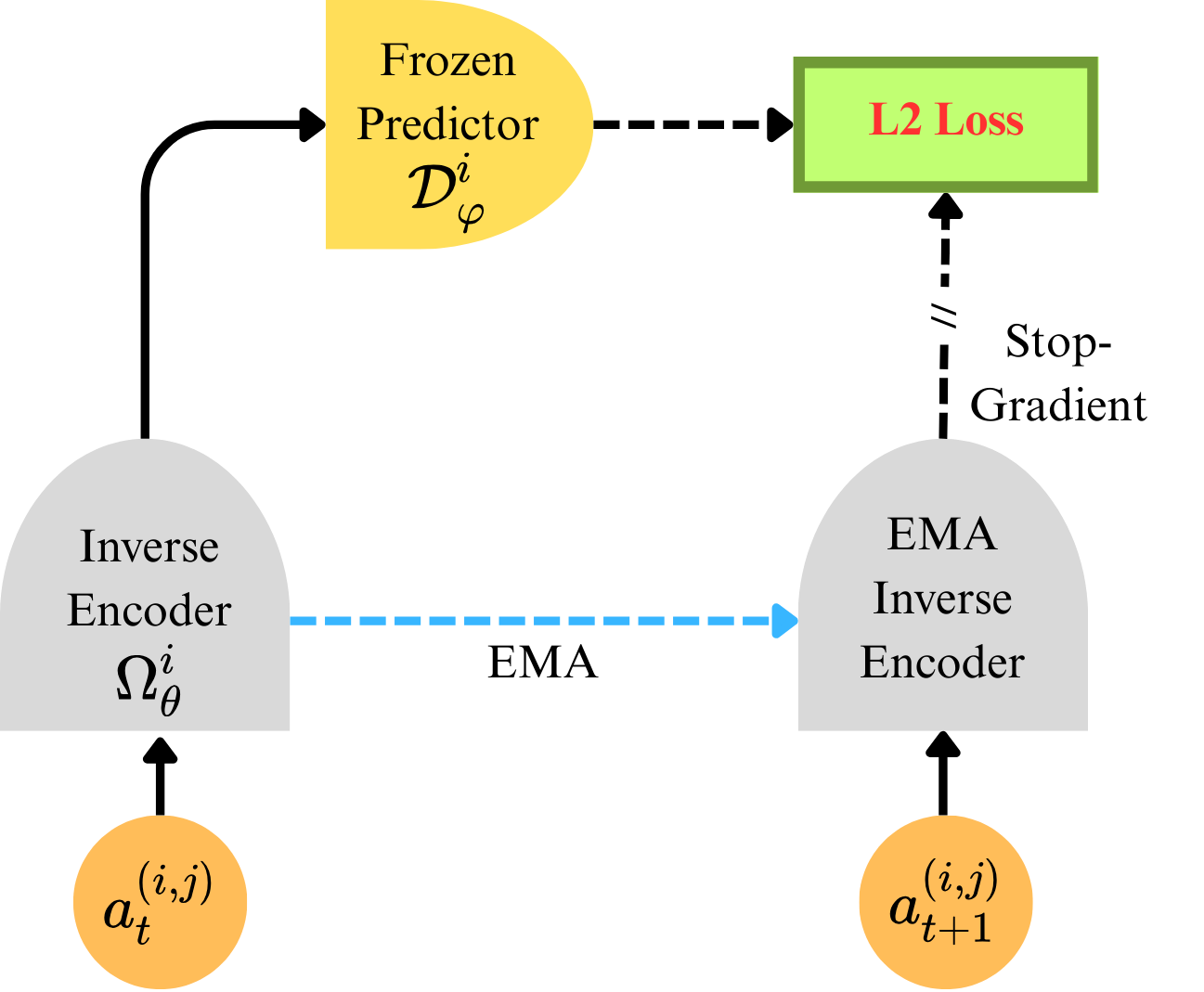}
    \caption{Few-shot learning for inverse belief inference by a self-supervised method in multi-agent systems.}
    \label{fig:fewshotfig}
\end{figure}
We propose a self-supervised, few-shot adaptation approach, with a similar architecture to joint-embedding
predictive architecture proposed in \citep{assran2025v}, that personalizes the inverse belief encoder of agent $i$ to any new partner $j$ by using only short behavioral trajectories
$\{\boldsymbol{a}_t^{(i,j)}\}$ and without supervised labels or explicit communication, as shown in Figure \ref{fig:fewshotfig}. Particularly,
the student inverse encoder $\Omega_\theta^{i}$ maps the current action to a latent belief $\hat{\boldsymbol{b}}^{(i,j)}_{t} \;=\; \Omega_\theta^{i}\!\big(\boldsymbol{a}_{t}^{(i,j)},\boldsymbol{g}_{t}^{(i,j)}\big)$, as given in (\ref{eq:inv}),
which is fed into a frozen predictor $\mathcal D_\varphi^{i}$ to generate a one-step belief prediction:
\begin{equation}
\tilde{\boldsymbol{b}}_{t+1}^{(i,j)} \;=\; \mathcal D_\varphi^{i}\!\big(\hat{\boldsymbol{b}}^{(i,j)}_{t},\boldsymbol{a}^{(i,j)}_t \hat{\boldsymbol{g}}^{(i,j)}_{t}\big),
\quad \text{with }\ \mathrm{sg}\left(\mathcal D_\varphi^{i}\right).
\end{equation}
Learning is driven by an action-space temporal consistency objective, i.e., L2 regression to the next observed action:
\begin{equation}
\label{eq:fewshot-inv-belief-loss}
\mathcal L_{\text{inv-belief}}^{(i,j)}
\;=\; \sum_{t} \big\| \mathcal D_\varphi^{i}\!\big(\Omega_\theta^{i}(\boldsymbol{a}_t^{(i,j)},\hat{\boldsymbol{g}}^{(i,j)}_{t})\big) - \bar{\Omega}_\theta^{i}(\boldsymbol{a}_{t+1}^{(i,j)},\hat{\boldsymbol{g}}^{(i,j)}_{t+1}) \big\|_2^2.
\end{equation}
where we maintain an EMA target encoder $\bar\Omega^{i}$ as the teacher encoder to stabilize few-shot updates and prevent representation drift as
\begin{equation}
\label{eq:ema-update}
\bar\Omega^{i} \;\leftarrow\; \tau\,\bar\Omega^{i} + (1-\tau)\,\Omega_\theta^{i},
\quad \tau\in[0,1).
\end{equation}

\newpage

\section{Algorithms}
\label{ap:alg}
The training process of the proposed MetaMind involves two stages. At Stage One, the MetaMind of each agent is trained in the single-agent settings to learn the classical world model-based forward inference model and Meta-ToM-based inverse inference model. At Stage Two, MetaMind of each agent generalizes to multi-agent settings in a zero-shot and scalable manner, where Meta-ToM can be directly used to infer others' internal mental states by analogical reasoning, and learn collective beliefs in a self-supervised manner.
The training details of Stage One and Stage Two can be summarized as shown in Algorithm \ref{alg:2} and Algorithm \ref{alg:3}. Since the MPC-based planner is typical and reused in Algorithms \ref{alg:2} and \ref{alg:3}, we present it separately as Algorithm \ref{alg:1}.

  \begin{algorithm}[H]
    \caption{A MPC-Based Planner.}
    \label{alg:1}
 \begin{algorithmic}
     \STATE Compute $ \boldsymbol{b}_t = \mathcal{B}_\varphi\left(\boldsymbol{o}_{t}, \boldsymbol{g} \right)$ and policy prior $\tilde{\boldsymbol{a}}_t = \pi_\varphi \left(\boldsymbol{b}_t, \boldsymbol{g}\right)$.
    \STATE Initialize ($\mu_t$, $\sigma_t$) from previous solution or $\tilde{\boldsymbol{a}}_t$.
    \STATE Sample candidate actions sequence $\left\{\boldsymbol{a}_{t:t+H}^{(v)}\right\}_{j=1}^J \sim \mathcal{N}(\mu_t, \operatorname{diag}(\sigma_t^2))$.
    \FOR{Iteration $u \rightarrow U$}
    \FOR{Candidate sequence $v \rightarrow V$}
    \STATE Imagine trajectories $\tilde{\boldsymbol{b}}_{t+1:t+H}^{(v)} = \mathcal{D}_{\varphi}(\boldsymbol{b}_t, \boldsymbol{a}_{t:t+H}^{(v)}, \boldsymbol{g})$.
    \STATE Estimate $R^{(v)} = \mathbb{E}_{\boldsymbol{a}_{t:t+H}}  \left[\sum_{h=0}^{H-1}  \gamma^h \mathcal{R}_{\varphi}\left(\tilde{\boldsymbol{b}}^{(v)}_{t+h}, \boldsymbol{a}^{(v)}_{t+h},\boldsymbol{g}\right)+\gamma^H \mathcal{Q}_{\varphi}\left(\tilde{\boldsymbol{b}}^{(v)}_{t+H},\boldsymbol{a}^{(v)}_{t+H},\boldsymbol{g}\right)\right].$
    \ENDFOR
    \STATE Update ($\mu_t$, $\sigma_t$) to maximize $\{R^{(v)}\}$
    \ENDFOR
    \STATE \textbf{Output:} The optimized action $\boldsymbol{a}_{t}^{*} \sim \mathcal{N}(\mu^*_t, \sigma^*_t)$.
  \end{algorithmic}
 \end{algorithm}
 
\begin{algorithm}[H]
    \caption{Self-Supervised Meta-ToM Training in Single-Agent Settings.}
    \label{alg:2}
 \begin{algorithmic}
    \STATE Set hyper-parameters $N$, $B$, $C$, $L$, $H$, $U$, $V$, and $\eta$.
    \STATE Initialize dataset $\mathcal{D}$ with $S$ seed episodes. 
    \STATE Initialize model parameters $\varphi$, $\theta$ and $\varrho$ for forward, inverse and aggregation model, respectively.
    \FOR{Training step $n \rightarrow N$}
    \FOR{Collect interval $c \rightarrow C$}
    \STATE \textcolor{softgreen}{// Forward Model Training}
    \STATE Sample $B$ Sequences $\{(\boldsymbol{o}_t, \boldsymbol{a}_t, r_t, \boldsymbol{g})\}_{t=k}^{k+L} \sim \mathcal{D}$.
    \STATE Compute $ \boldsymbol{b}_t = \mathcal{B}_\varphi\left(\boldsymbol{o}_{t}, \boldsymbol{g} \right)$.
    \STATE Predict $\tilde{\boldsymbol{b}}_t = \mathcal{D}_\varphi \left(\boldsymbol{b}_t, \boldsymbol{a}_t, \boldsymbol{g}\right)$, $\tilde{r} = \mathcal{R}_{\varphi}(\boldsymbol{b}_t,\boldsymbol{a}_t,\boldsymbol{g})$, and $\tilde{q} = \mathcal{Q}_{\varphi}(\boldsymbol{b}_t,\boldsymbol{a}_t,\boldsymbol{g})$.
    \STATE Update the forward model $\varphi \gets \varphi - \eta_\varphi \nabla_\varphi \mathcal{L}_{\text{pred}}(\varphi)$.
    \STATE \textcolor{softgreen}{// Inverse Model Training}
    \STATE Recover $\hat{\boldsymbol{g}}_{t}=\Psi_\theta(\boldsymbol{a}_{<t},\hat{\boldsymbol{g}}_{t-1})$ and $\hat{\boldsymbol{b}}_t= \Omega_\theta(\boldsymbol{a}_t,\hat{\boldsymbol{g}}_k)$.
    \STATE Update the inverse model $\theta \gets \theta - \eta_\theta \nabla_\theta \mathcal{L}_{\text{inv}}(\theta)$.
    \ENDFOR
    \STATE \textcolor{blue}{// Real Environment Interaction \& Data Collection}
    \STATE Start a environment with a random goal $\boldsymbol{g}$ by $env.reset()$.
    \FOR{Time step $t \rightarrow T$}
    \STATE \textcolor{softgreen}{// MPC Decision Making}
    \STATE Obtain the optimal action $\boldsymbol{a}^*_t$ from Algorithm \ref{alg:3}.
    \STATE Collect real data $r_t, \boldsymbol{o}_{t+1} \gets env.step(\boldsymbol{a}^*_t,\boldsymbol{g})$.
    \ENDFOR
    \STATE Add experience to dataset $\mathcal{D} \gets \mathcal{D} \cup \{(\boldsymbol{o}_t, \boldsymbol{a}_t, r_t,\boldsymbol{g})\}_{t=1}^T$.
    \ENDFOR
 \end{algorithmic}
 \end{algorithm}
 
\newpage

 \begin{algorithm}[H]
    \caption{MetaMind in Multi-Agent Settings.}
    \label{alg:3}
 \begin{algorithmic}
    \STATE Set hyper-parameters $N$, $B$, $C$, $L$, $H$, $U$, $V$, and $\eta$.
    \STATE Initialize dataset $\mathcal{D}$ with $S$ seed episodes. 
    \STATE Initialize model parameters $\varphi$, $\theta$ and $\varrho$ for forward, inverse and aggregation model, respectively.
    \FOR{Training step $n \rightarrow N$}
    \FOR{Collect interval $c \rightarrow C$}
    \STATE \textcolor{softgreen}{// Forward Model}
    \STATE Sample $B$ Sequences $\{(\boldsymbol{o}_t, \boldsymbol{a}_t, r_t, \boldsymbol{g})\}_{t=k}^{k+L} \sim \mathcal{D}$.
    \STATE Compute $ \boldsymbol{b}_t = \mathcal{B}_\varphi\left(\boldsymbol{o}_{t}, \boldsymbol{g} \right)$.
    \STATE Predict $\tilde{\boldsymbol{b}}_t = \mathcal{D}_\varphi \left(\boldsymbol{b}_t, \boldsymbol{a}_t, \boldsymbol{g}\right)$, $\tilde{r} = \mathcal{R}_{\varphi}(\boldsymbol{b}_t,\boldsymbol{a}_t,\boldsymbol{g})$, and $\tilde{q} = \mathcal{Q}_{\varphi}(\boldsymbol{b}_t,\boldsymbol{a}_t,\boldsymbol{g})$.
    \STATE \textcolor{softgreen}{// Inverse Model}
    \STATE Recover $\hat{\boldsymbol{g}}_{t}=\Psi_\theta(\boldsymbol{a}_{<t},\hat{\boldsymbol{g}}_{t-1})$ and $\hat{\boldsymbol{b}}_t= \Omega_\theta(\boldsymbol{a}_t,\hat{\boldsymbol{g}}_k)$.
    \STATE \textcolor{softgreen}{// Collective Belief Training (Agent $i$)}
    \STATE Form collective belief by $\mathcal{E}_{\varrho}^i(\{\tilde{\boldsymbol{b}}_{t+1}^{(i,j)},\hat{\boldsymbol{g}}_{t}^{(i,j)}\})$
    \STATE Update the aggregation model $\varrho \gets \varrho - \eta_\varrho \nabla_\varrho \mathcal{L}_{\text{col}}(\varrho)$.
    \ENDFOR
    \STATE \textcolor{blue}{// Real Environment Interaction \& Data Collection}
    \STATE Start a environment with a random goal $\boldsymbol{g}$ by $env.reset()$.
    \FOR{Time step $t \rightarrow T$}
    \STATE \textcolor{softgreen}{// MPC Decision Making (Agent $i$)}
    \STATE Obtain imagined multi-agent trajectories by $\mathcal{E}_{\varrho}^i(\{\tilde{\boldsymbol{b}}_{t+1}^{(i,j)},\hat{\boldsymbol{g}}_{t}^{(i,j)}\})$
    \STATE Obtain the optimal action $\boldsymbol{a}^*_t$ from Algorithm \ref{alg:1}.
    \STATE Collect real data $r_t, \boldsymbol{o}_{t+1} \gets env.step(\boldsymbol{a}^*_t,\boldsymbol{g})$.
    \ENDFOR
    \STATE Add experience to dataset $\mathcal{D} \gets \mathcal{D} \cup \{(\boldsymbol{o}_t, \boldsymbol{a}_t, r_t,\boldsymbol{g})\}_{t=1}^T$.
    \ENDFOR
 \end{algorithmic}
 \end{algorithm}


\section{Proof}
\subsection{Proof of Theorem \ref{thm:M2}: Goal Identification}
\label{proof:a2}
\begin{proof}
We first prove $d_H(\boldsymbol{g}^*)<d_H(\boldsymbol{g}^k)$ for all $k\neq k^*$. Based on Definition~\ref{def1},
for any $\boldsymbol{g}$, we have the behavior-only TD predictor and residual as
\begin{equation}
\label{eq:predres}
\hat{\boldsymbol{b}}_t(\boldsymbol{g}):=\Omega(\boldsymbol{a}_t,\boldsymbol{g}),\quad   \tilde{\boldsymbol{b}}_{t+1}(\boldsymbol{g}):=\mathcal{D}(\hat{\boldsymbol{b}}_t(\boldsymbol{g}),\boldsymbol{a}_t,\boldsymbol{g}),  
\end{equation}
and 
\begin{equation}
\label{eq:predres2}
\tilde{\boldsymbol{a}}_{t+1}(\boldsymbol{g}):=\pi(\tilde{\boldsymbol{b}}_{t+1}(\boldsymbol{g}),\boldsymbol{g}),\quad
\phi_t(\boldsymbol{g}):=\boldsymbol{a}_{t+1}-\tilde{\boldsymbol{a}}_{t+1}(\boldsymbol{g}),
\end{equation}
and stack 
\begin{equation}
\Phi_H(\boldsymbol{g}):=[\phi_0(\boldsymbol{g});\dots;\phi_{H-2}(\boldsymbol{g})].
\end{equation}
As defined in (\ref{eq:predres2}) at $\boldsymbol{g}^*$, we have
\begin{equation}
\phi_t(\boldsymbol{g}^*)=\boldsymbol{a}_{t+1}-\pi\!\big(\mathcal{D}(\Omega(\boldsymbol{a}_t,\boldsymbol{g}^*),\boldsymbol{a}_t,\boldsymbol{g}^*),\boldsymbol{g}^*\big).
\end{equation}
Given $\boldsymbol{a}_{t+1}=\pi^j(\boldsymbol{b}_{t+1}^j$, $\boldsymbol{g}^*)$ and $\boldsymbol{b}_{t+1}^j=\mathcal{D}^j(\boldsymbol{b}_t^j,\boldsymbol{a}_t,\boldsymbol{g}^*)$, $\phi_t(\boldsymbol{g}^*)$ can be written as
\begin{align}
\label{eq:M2decA}
\phi_t(\boldsymbol{g}^*)
&=\underbrace{\pi^j(\boldsymbol{b}_{t+1}^j,\boldsymbol{g}^*)-\pi(\boldsymbol{b}_{t+1}^j,\boldsymbol{g}^*)}_{\Delta\pi_{t+1}}
+\ \pi\!\big(\mathcal{D}^j(\boldsymbol{b}_t^j,\boldsymbol{a}_t,\boldsymbol{g}^*),\boldsymbol{g}^*\big)
-\ \pi\!\big(\mathcal{D}(\Omega(\boldsymbol{a}_t,\boldsymbol{g}^*),\boldsymbol{a}_t,\boldsymbol{g}^*),\boldsymbol{g}^*\big)\nonumber\\
&=\Delta\pi_{t+1}
+\ \Big(\pi(\mathcal{D}^j(\boldsymbol{b}_t^j,\boldsymbol{a}_t,\boldsymbol{g}^*),\boldsymbol{g}^*)-\pi(\mathcal{D}(\boldsymbol{b}_t^j,\boldsymbol{a}_t,\boldsymbol{g}^*),\boldsymbol{g}^*)\Big) \nonumber\\
&\qquad-\ \Big(\pi(\mathcal{D}(\Omega(\boldsymbol{a}_t,\boldsymbol{g}^*),\boldsymbol{a}_t,\boldsymbol{g}^*),\boldsymbol{g}^*)-\pi(\mathcal{D}(\boldsymbol{b}_t^j,\boldsymbol{a}_t,\boldsymbol{g}^*),\boldsymbol{g}^*)\Big).
\end{align}
Since the per-step mismatches are given by
$\Delta \boldsymbol{b}^j_t:=\Omega(\boldsymbol{a}_t,\boldsymbol{g}^*)-\boldsymbol{b}_t^j$, $\Delta \mathcal{D}_t(\boldsymbol{b}):=\mathcal{D}(\boldsymbol{b},\boldsymbol{a}_t,\boldsymbol{g}^*)-\mathcal{D}^j(\boldsymbol{b},\boldsymbol{a}_t,\boldsymbol{g}^*)$, and $
\Delta\pi_{t+1}:=\pi^j(\boldsymbol{b}_{t+1}^j,\boldsymbol{g}^*)-\pi(\boldsymbol{b}_{t+1}^j,\boldsymbol{g}^*)$, the last item in (\ref{eq:M2decA}) equals
$\pi(\mathcal{D}(\boldsymbol{b}_t^j+\Delta \boldsymbol{b}^j_t,\boldsymbol{a}_t,\boldsymbol{g}^*),\boldsymbol{g}^*)-\pi(\mathcal{D}(\boldsymbol{b}_t^j,\boldsymbol{a}_t,\boldsymbol{g}^*),\boldsymbol{g}^*)$, hence
\begin{align}
\phi_t(\boldsymbol{g}^*)
&=\Delta\pi_{t+1}
+\ \Big(\pi(\mathcal{D}^j(\boldsymbol{b}_t^j,\boldsymbol{a}_t,\boldsymbol{g}^*),\boldsymbol{g}^*)-\pi(\mathcal{D}(\boldsymbol{b}_t^j,\boldsymbol{a}_t,\boldsymbol{g}^*),\boldsymbol{g}^*)\Big)\nonumber\\
&\qquad-\ \Big(\pi(\mathcal{D}(\boldsymbol{b}_t^j+\Delta \boldsymbol{b}^j_t,\boldsymbol{a}_t,\boldsymbol{g}^*),\boldsymbol{g}^*)-\pi(\mathcal{D}(\boldsymbol{b}_t^j,\boldsymbol{a}_t,\boldsymbol{g}^*),\boldsymbol{g}^*)\Big).
\label{eq:M2exact}
\end{align}
(\ref{eq:M2exact}) provides an exact identity.
By the $\xi_\pi$-Lipschitzness of $\pi(\cdot,\boldsymbol{g}^*)$ in $\boldsymbol{b}$,
\begin{equation}
\begin{aligned}
\label{eq:M2L1}
\Big\|\pi(\mathcal{D}^j(\boldsymbol{b}_t^j,\boldsymbol{a}_t,\boldsymbol{g}^*),\boldsymbol{g}^*)-\pi(\mathcal{D}(\boldsymbol{b}_t^j,\boldsymbol{a}_t,\boldsymbol{g}^*),\boldsymbol{g}^*)\Big\|
&\le \xi_\pi\ \|\mathcal{D}^j(\boldsymbol{b}_t^j,\boldsymbol{a}_t,\boldsymbol{g}^*)-\mathcal{D}(\boldsymbol{b}_t^j,\boldsymbol{a}_t,\boldsymbol{g}^*)\| \\
& = \xi_\pi\,\|\Delta \mathcal{D}_t(\boldsymbol{b}_t^j)\|. 
\end{aligned}
\end{equation}
By using $\xi_\pi$ again and then using the $\xi_{\mathcal{D}}$-Lipschitzness of $\mathcal{D}(\cdot,\boldsymbol{a}_t,\boldsymbol{g}^*)$ in $\boldsymbol{b}$,
\begin{align}
\Big\|\pi(\mathcal{D}(\boldsymbol{b}_t^j+\Delta \boldsymbol{b}^j_t,\boldsymbol{a}_t,\boldsymbol{g}^*),\boldsymbol{g}^*)-\pi(\mathcal{D}(\boldsymbol{b}_t^j,\boldsymbol{a}_t,\boldsymbol{g}^*),\boldsymbol{g}^*)\Big\|
&\le \xi_\pi \ \| \mathcal{D}(\boldsymbol{b}_t^j+\Delta \boldsymbol{b}^j_t,\boldsymbol{a}_t,\boldsymbol{g}^*)-\mathcal{D}(\boldsymbol{b}_t^j,\boldsymbol{a}_t,\boldsymbol{g}^*)\|\nonumber\\
&\le \xi_\pi \xi_{\mathcal{D}}\ \|\Delta \boldsymbol{b}^j_t\|. \label{eq:M2L2}
\end{align}

Take norms in (\ref{eq:M2exact}) and apply the triangle inequality together with
(\ref{eq:M2L1})-(\ref{eq:M2L2}), then
\begin{equation}
\label{eq:M2rt}
\|\phi_t(\boldsymbol{g}^*)\|\ \le\ \underbrace{\|\Delta\pi_{t+1}\|}_{\text{policy mismatch}}+\,\xi_\pi\big(\underbrace{\|\Delta \mathcal{D}_t(\boldsymbol{b}_t^j)\|}_{\text{dynamics mismatch}}+\,\xi_{\mathcal{D}}\underbrace{\|\Delta \boldsymbol{b}^j_t\|}_{\text{belief mismatch}} \big) := \alpha_t.
\end{equation}
Therefore, we can obtain
\begin{equation}
\label{eq:M2stack}
d_H(\boldsymbol{g}^*)=\sum_{t=0}^{H-2}\|\phi_t(\boldsymbol{g}^*)\|^2
\ \le\ \sum_{t=0}^{H-2}\alpha_t^2
\ =\ \delta_{\mathrm{act}}^{\,2}
\quad\Rightarrow\quad
d_H(\boldsymbol{g}^*)\ \le\ \delta_{\mathrm{act}}.
\end{equation}
Given the empirical residual gap $\bar\gamma_H(\mathcal C;\boldsymbol{g}^*):= \min_{k\neq k^*}\ \|\Phi_H(\boldsymbol{g}^k)-\Phi_H(\boldsymbol{g}^*)\|_2$ defined in Definition~\ref{def3}, $\gamma_H(\mathcal C;\boldsymbol{g}^*)\ :=\ \min_{k\neq k^*}\ \|\Phi_H(\boldsymbol{g}^k)\|_2$ and $d_H(\boldsymbol{g}^*)\le\delta_{\mathrm{act}}$ given in (\ref{eq:M2stack}), for any $k\neq k^*$, we have
\begin{equation}
\bar\gamma_H(\mathcal C;\boldsymbol g^*)
\ \ge\ \gamma_H(\mathcal C;\boldsymbol{g}^*)-d_H(\boldsymbol{g}^*)
\ \ge\ \gamma_H(\mathcal C;\boldsymbol{g}^*)-\delta_{\mathrm{act}}.
\end{equation}
Hence, under $\delta_{\mathrm{act}}<\gamma/2$,
\begin{equation}
\bar\gamma_H(\mathcal C;\boldsymbol g^*)\ \ge\ \gamma_H(\mathcal C;\boldsymbol{g}^*)-\delta_{\mathrm{act}}\ >\ \gamma_H(\mathcal C;\boldsymbol{g}^*)/2\ >\ 0.
\end{equation}
Therefore for every $k\neq k^*$,
\begin{equation}
d_H(\boldsymbol g^k)\ \ge\ d_H(\boldsymbol{g}^*)+\bar\gamma_H(\mathcal C;\boldsymbol g^*)\ >\ d_H(\boldsymbol{g}^*),
\end{equation}
thus, $\boldsymbol g^*$ is the unique minimizer of $k\mapsto d_H(\boldsymbol g^k)$, and the nearest-neighbor estimator returns $\hat{\boldsymbol g}=\boldsymbol g^*$. Theorem \ref{thm:M2} is proved.
\end{proof}

\subsection{Proof of Lemma \ref{lemma:local-margin}: Local Linear Margin}
\label{proof:l1}
\begin{proof}
Fix $k\neq k^*$ and set $\Delta_k:=\boldsymbol{g}^k-\boldsymbol{g}^{k^*}$.
For each coordinate $\Phi_H$, by the fundamental theorem of calculus,
$\Phi_H(\boldsymbol{g}^k)-\Phi_H(\boldsymbol{g}^{k^*})=\int_0^1 \nabla \Phi_H(\boldsymbol{g}^{k^*}+s\Delta_k)^\top\Delta_k\,ds$.
Stacking all coordinates yields
\begin{equation}
\label{eq:meanvalue}
\Phi_H(\boldsymbol{g}^k)-\Phi_H(\boldsymbol{g}^{k^*})=\Big(\int_0^1 J_\Phi(\boldsymbol{g}^{k^*}+s\Delta_k)\,ds\Big)\Delta_k
=:J_{\rm avg}(\boldsymbol{g}^{k^*},\boldsymbol{g}^k)\,\Delta_k.
\end{equation}
Since $\Phi_H(\boldsymbol{g}^{k^*})=0$, we have $\Phi_H(\boldsymbol{g}^k)=J_{\rm avg}\Delta_k$.

Assume $J_\Phi$ is $L_J$-Lipschitz near $\boldsymbol{g}^*$. Then
\begin{align}
\big\|J_{\rm avg}-J_\Phi(\boldsymbol{g}^*)\big\|
&=\Big\|\int_0^1 \big(J_\Phi(\boldsymbol{g}^{k^*}+s\Delta_k)-J_\Phi(\boldsymbol{g}^*)\big)\,ds\Big\| \nonumber \\
&\le \int_0^1 \big\|J_\Phi(\boldsymbol{g}^{k^*}+s\Delta_k)-J_\Phi(\boldsymbol{g}^*)\big\|\,ds \nonumber\\
&\le \int_0^1 L_J s\|\Delta_k\|\,ds = \frac{L_J}{2}\,\|\Delta_k\|.
\label{eq:JavgLip}
\end{align}

Applying the Weyl's perturbation inequality, i.e., for any matrices $A,B$, $\sigma_{\min}(A+B)\ge \sigma_{\min}(A)-\|B\|$, to $A=J_\Phi(\boldsymbol{g}^*)$ and $B=J_{\rm avg}-J_\Phi(\boldsymbol{g}^*)$, we have
\begin{equation}
\sigma_{\min}(J_{\rm avg})
\ge \sigma_{\min}(J_\Phi(\boldsymbol{g}^*))-\|J_{\rm avg}-J_\Phi(\boldsymbol{g}^*)\|
\stackrel{(\ref{eq:JavgLip})}{\ge} \sigma_{\min}(J_\Phi(\boldsymbol{g}^*))-\frac{L_J}{2}\|\Delta_k\|.
\end{equation}
If $\|\Delta_k\|\le \phi_\star:=\sigma_{\min}(J_\Phi(\boldsymbol{g}^*))/L_J$, then
\begin{equation}
\sigma_{\min}(J_{\rm avg})\ge \frac12\sigma_{\min}(J_\Phi(\boldsymbol{g}^*)).
\end{equation}
By using $\|Ax\|\ge \sigma_{\min}(A)\|x\|$ and (\ref{eq:meanvalue}), we have
\begin{equation}
\label{eq:M1margin}
d_H(\boldsymbol{g}^k)=\|J_{\rm avg}\Delta_k\|\ge \sigma_{\min}(J_{\rm avg})\,\|\Delta_k\|
\ge \frac12\,\sigma_{\min}(J_\Phi(\boldsymbol{g}^*))\,\|\boldsymbol{g}^k-\boldsymbol{g}^{k^*}\|,
\end{equation}
and (\ref{eq:M1margin}) follows by minimizing over $k\neq k^*$.
Lemma \ref{lemma:local-margin} is proved.
\end{proof}

\subsection{Proof of Theorem \ref{thm:a2}: Sufficient Horizon Size for Goal Identification}
\label{proof:a3}
\begin{proof}
We have the persistence-of-excitation defined in (\ref{eq:PE}) as
\begin{equation}
\sum_{t=k}^{k+m-1}\Big(\frac{\partial \phi_t}{\partial \boldsymbol{g}}(\boldsymbol{g}^*)\Big)^{\!\top}
\Big(\frac{\partial \phi_t}{\partial \boldsymbol{g}}(\boldsymbol{g}^*)\Big)\ \succeq\ \beta\, I_{d_g},
\end{equation}
at the true goal $\boldsymbol{g}^*$ and we have
\begin{equation}
G_t(\boldsymbol{g}^*):=\frac{\partial \boldsymbol \phi_t}{\partial \boldsymbol g}(\boldsymbol{g}^*),
\quad
W_H(\boldsymbol{g}^*):=\sum_{t=0}^{H-2} G_t(\boldsymbol{g}^*)^\top G_t(\boldsymbol{g}^*).
\end{equation}
Partition the index set $\{0,1,\ldots,H-2\}$ into
$q:=\big\lfloor\frac{H-1}{m}\big\rfloor$ disjoint windows
$\mathcal I_i=\{(i-1)m,\ldots,im-1\}$ for $i=1,\ldots,q$.
By (\ref{eq:PE}), for each window, we have
\begin{equation}
\sum_{t\in\mathcal I_i} G_t^\top G_t \succeq \beta I_{d_{\boldsymbol{g}}}.
\end{equation}
Then, adding these positive semi-definite inequalities over $i=1,\ldots,q$ obtains
\begin{equation}
\label{tag:1}
W_H(\boldsymbol{g}^*)=\sum_{t=0}^{H-2} G_t^\top G_t
=\sum_{i=1}^q\ \sum_{t\in\mathcal I_i} G_t^\top G_t
\succeq \sum_{i=1}^q \beta I_{d_g}
= q\,\beta\, I_{d_{\boldsymbol{g}}}.
\end{equation}

By the Rayleigh-Ritz characterization, we have
\begin{equation}
\label{tag:2}
\lambda_{\min}\!\big(W_H(\boldsymbol{g}^*)\big)
=\min_{\|z\|=1} z^\top W_H(\boldsymbol{g}^*) z
\ \stackrel{(\ref{tag:1})}{\ge}\ \min_{\|z\|=1} z^\top (q\beta I) z
= q\beta
= \Big\lfloor\frac{H-1}{m}\Big\rfloor\,\beta.
\end{equation}

Let $J_\Phi(\boldsymbol{g}^*)$ be the stacked Jacobian.
By taking the singular value decomposition $J_\Phi=U\Sigma V^\top$, then we have
$W_H=J_\Phi^\top J_\Phi=V\Sigma^2 V^\top$, and every eigenvalue of $W_H$ equals a squared singular value of $J_\Phi$.
Hence, we have
\begin{equation}
\label{tag:3}
\sigma_{\min}\!\big(J_\Phi(\boldsymbol{g}^*)\big)
=\sqrt{\lambda_{\min}\!\big(W_H(\boldsymbol{g}^*)\big)}.
\end{equation}
By the local constructive margin inequality (\ref{eq:M1margin}),
for each $k\neq k^*$,
\begin{equation}
d_H(\boldsymbol{g}^k)
\ \ge\ \frac12\,\sigma_{\min}\!\big(J_\Phi(\boldsymbol{g}^*)\big)\,
\|\boldsymbol{g}^k-\boldsymbol{g}^{k^*}\|.
\end{equation}
Let $\rho_{\min}:=\min_{k\neq \ell}\|\boldsymbol{g}^k-\boldsymbol{g}^\ell\|$.
Taking the minimum over $k\neq k^*$, we have
\begin{equation}
\label{tag:4}
\gamma_H(\mathcal C;\boldsymbol{g}^*)
:=\min_{k\neq k^*}\|\Phi_H(\boldsymbol{g}^k)\|
\ \ge\ \frac12\,\sigma_{\min}\!\big(J_\Phi(\boldsymbol{g}^*)\big)\,\rho_{\min}.
\end{equation}
Now substitute (\ref{tag:3}) into (\ref{tag:4}), and then apply (\ref{tag:2}), we have
\begin{equation}
\gamma_H(\mathcal C;\boldsymbol{g}^*)
\ \ge\ \frac12\,\rho_{\min}\,\sqrt{\lambda_{\min}\!\big(W_H(\boldsymbol{g}^*)\big)}
\ \ge\ \frac12\,\rho_{\min}\,\sqrt{\Big\lfloor\frac{H-1}{m}\Big\rfloor\,\beta}.
\end{equation}
Hence, a sufficient condition is given by
\begin{equation}
\delta_{\mathrm{act}}\ <\ \frac14\,\rho_{\min}\,\sqrt{\Big\lfloor\frac{H-1}{m}\Big\rfloor\,\beta}
\quad\Longleftrightarrow\quad
\Big\lfloor\frac{H-1}{m}\Big\rfloor\ >\ \Big(\frac{4\,\delta_{\mathrm{act}}}{\rho_{\min}\sqrt{\beta}}\Big)^{\!2}\;=\;T.
\end{equation}
To ensure the strict inequality on the left, it is enough to require
\begin{equation}
\Big\lfloor\frac{H-1}{m}\Big\rfloor\ \ge\ \big\lfloor T\big\rfloor+1,
\end{equation}
because $\lfloor T\rfloor+1> T$ for all real $T$.
Since $\lfloor x\rfloor\ge n$ iff $x\ge n$ for integer $n$, the above is equivalent to
\begin{equation}
\frac{H-1}{m}\ \ge\ \big\lfloor T\big\rfloor+1
\quad\Longleftrightarrow\quad
H\ \ge\ 1\;+\;m\Big(\big\lfloor T\big\rfloor+1\Big).
\end{equation}
Under this condition, we have $\delta_{\mathrm{act}}<\bar\gamma_H(\mathcal C;\boldsymbol g^*)/2$, so by Theorem~\ref{thm:M2}, the nearest-neighbor rule recovers the true goal. Corollary \ref{thm:a2} is proved.
\end{proof}

\begin{remark}
If $T\notin\mathbb{N}$, the simpler bound $H\ge 1+m\lceil T\rceil$ already ensures
$\lfloor\frac{H-1}{m}\rfloor\ge\lceil T\rceil> T$. When $T\in\mathbb{N}$, an extra window, replacing $\lceil T\rceil$ by $T+1$, is needed to guarantee strict inequality.
\end{remark}

\newpage

\section{Hyperparameters}\label{hyp}
 \subsection{Environment Setting}
\begin{table}[H]
\centering
\caption{Environment Setting for MARIE (Goal Definitions)}
\label{tab:env_setting}
\begin{tabular}{l|l|c|p{8.6cm}}
\hline
\textbf{Env} & \textbf{Scenario} & \textbf{\#Agents} & \textbf{Goal Setting} \\ \hline

\multirow{9}*{SMAC (Easy)} 
& 1c3s5z & 9 &
Type: Heterogeneous \newline
Roles: C$\times$1, S$\times$3, Z$\times$5 \newline
Goals: $G{=}3$  \\ \cline{2-4}

& 2m\_vs\_1z & 2 &
Type: Homogeneous \newline
Roles: M$\times$2 \newline
Goals: $G{=}1$  \\ \cline{2-4}

& 2s\_vs\_1sc & 2 &
Type: Homogeneous \newline
Roles: S$\times$2 \newline
Goals: $G{=}1$  \\ \cline{2-4}

& 2s3z & 5 &
Type: Heterogeneous \newline
Roles: S$\times$2, Z$\times$3 \newline
Goals: $G{=}2$ \\ \cline{2-4}

& 3m & 3 &
Type: Homogeneous \newline
Roles: M$\times$3 \newline
Goals: $G{=}1$ \\ \cline{2-4}

& 3s\_vs\_3z & 6 &
Type: Homogeneous \newline
Roles: S$\times$3 \newline
Goals: shared $G{=}1$  \\ \cline{2-4}

& 3s\_vs\_4z & 7 &
Type: Homogeneous \newline
Roles: S$\times$3 \newline
Goals: $G{=}1$ \\ \cline{2-4}

& 8m & 8 &
Type: Homogeneous \newline
Roles: M$\times$8 \newline
Goals: $G{=}1$ \\ \cline{2-4}

& MMM & 10 &
Type: Heterogeneous \newline
Roles: Md$\times$1, Mr$\times$2, Ma$\times$7 \newline
Goals: $G{=}3$ \\ \hline

\multirow{2}*{SMAC (Hard)} 
& 3s\_vs\_5z & 8 &
Type: Homogeneous \newline
Roles: S$\times$3 \newline
Goals: $G{=}1$ \\ \cline{2-4}

& 2c\_vs\_64zg & 2 &
Type: Homogeneous \newline
Roles: C$\times$2 \newline
Goals: $G{=}1$ \\ \hline

SMAC (SuperHard) 
& corridor & 6 &
Type: Homogeneous \newline
Roles: Z$\times$6 \newline
Goals: $G{=}1$ \\ \hline
\end{tabular}

\vspace{1mm}
\footnotesize{\textbf{Role abbreviations:} C=Colossus, S=Stalker, Z=Zealot, M=Marine, Md=Medivac, Mr=Marauder, Ma=Marine. $G$ denotes the number of goal categories.
The total number of goals is $\bar{G}=17$}
\end{table}

\newpage

\subsection{Model Setting}
The hyperparameters of models are presented in TABLE 3.
\begin{table}[H]
    \centering
    \caption{Hyperparameter Settings}
    \label{tb:tdmpc2-hyp}
    \begin{tabular}{l|c|c}
    \hline
    \textbf{Parameter} & \textbf{Symbol} & \textbf{Value} \\ \hline
    \multicolumn{3}{l}{\textbf{Shared Parameters}} \\ \hline
  Replay memory size & --- & 1e6 \\ 
Batch size & $B$ & 50 \\ 
Sequence length & $L$ & 64 \\ 
Seed episode & $S$ & 5 \\ 
Training episodes & $N$ & 1e3 \\
Collect Interval & $C$ & 100 \\
Max episode length & --- & 500 \\
Exploration noise & --- & 0.3 \\
Imagination horizon & $H$ & 30 \\ 
Gradient clipping & --- & 100 \\ \hline

    \multicolumn{3}{l}{\textbf{Planning (MPC)}} \\ \hline
    Iterations & --- & 40 \\
    Encoder dim & --- & 256 \\
    MLP dim & --- & 512 \\
    Latent state dim & --- & 512 \\
    Task embedding dim & --- & 96 \\
    Activation & --- & LayerNorm + Mish \\
    Q-functions & --- & 5 \\ 
    Learning rate & --- & $3\times10^{-4}$ \\
    Encoder LR & --- & $1\times10^{-4}$ \\
    Gradient clip & --- & 20 \\ \hline

    \multicolumn{3}{l}{\textbf{Inverse Inference (Meta-ToM) Module}} \\ \hline
    Goal latent dim & $d_g$ & 128 \\
    Belief latent dim & $d_b$ & 128 \\
    Hidden dim (MLP) & --- & 256 \\
    Number of layers & --- & 2 \\
    Activation & --- & ReLU \\
    Normalization & --- & LayerNorm \\
    Dropout & --- & 0.1 \\ 
    Inverse loss weight & $\lambda_{\text{inv}}$ & 0.5 \\
    Reflection loss weight & $\lambda_{\text{ref}}$ & 0.1 \\
    Optimizer & --- & Adam \\
    Learning rate & $\eta_{\text{inv}}$ & $1\times10^{-4}$ \\
    EMA teacher momentum & $\tau$ & 0.995 \\ \hline

\multicolumn{3}{l}{\textbf{VQ-VAE Tokenizer} (MARIE)} \\ \hline
Encoder/Decoder layers & --- & 3 \\
Hidden size & --- & 512 \\
Activation & --- & GELU \\
Codebook size & $|Z|$ & 512 \\
Tokens per observation & $K$ & 16 \\
Code dimension & --- & 128 \\
Commitment loss coef. & $\beta$ & 10.0 \\
Codebook update & --- & EMA \\
Tokenizer batch size & $B_{tok}$ & 256 \\
Tokenizer learning rate & $\eta_{tok}$ & 3e-4 \\
Tokenizer optimizer & --- & AdamW \\ \hline
\end{tabular}
\end{table}

\newpage

\section{Further Discussion}

\subsection{Effectiveness}
The proposed MetaMind demonstrates effectiveness since:

\paragraph{Meta-ToM Mechanism.} By introducing a self-supervised Meta-ToM mechanism, each agent develops a bidirectional inference loop between beliefs and actions, which turns passive trajectory modeling into active, goal-oriented reasoning based on causal behavioral principles.

\paragraph{Self-Reflection.} The self-reflection reduces the ambiguity inherent in inverse inference and provides a principled means for bounding estimation error, as formally established through the identification analysis. 

\paragraph{Meta-Cognition.} By generalizing introspective inference from single-agent trajectories to other agents by analogical reasoning, MetaMind exhibits a form of meta-cognition that enables zero-shot adaptability in heterogeneous multi-agent environments. A self-supervised approach can be used for learning accurate inverse belief inference and learning collective beliefs in multi-agent systems.

These elements enable MetaMind to support reliable prediction and long-horizon planning without centralized supervised learning or explicit communication.
\subsection{Limitations}
Despite the promising results of the proposed MetaMind, several limitations remain.

\paragraph{Limited Communication.}
MetaMind depends on observable neighbor trajectories. When task-relevant partners are partially or fully unobservable, e.g., occlusion, misrecognition, sensing dropout, inverse and reflective inference become underconstrained. This degrades belief-goal estimation $\big(\tilde{\boldsymbol{b}}_{t+1}^{(i,j)},\,\hat{\boldsymbol{g}}_{t}^{(i,j)}\big)$ and cannot accurately form collective aggregation $\mathcal{E}_{\varrho}^{i}(\cdot)$. Consequently, prediction and planning reliability cannot be ensured under persistent partial observability of the task-related partners.

\paragraph{Goals Difference.}
The framework assumes a shared task structure across agents despite heterogeneous policies and representations. In adversarial or weakly cooperative settings, particularly with a large or evolving set of unknown goals, this assumption may be violated. Such violations compromise goal identifiability and limit zero-shot generalization beyond the known goal codebook.

\paragraph{Practical Applications.}
Current evaluations are simulation-centric. Real embodied multi-agent deployments introduce additional uncertainties (sensor noise, delays, non-ideal dynamics), resource constraints, safety requirements, and nonstationary conditions. These factors may reduce the fidelity of reflective inference and collective belief formation relative to controlled benchmarks.

\subsection{Future Work}
Several promising directions arise from these limitations: 

\paragraph{Mixed-Communication Mechanism.}
A feasible solution is to extend MetaMind with a \emph{mixed-communication mechanism} that flexibly combines implicit inference from trajectories with lightweight explicit signals when behavioral observations are insufficient. 
Specifically, (a) when neighbors are fully observable, agents rely on Meta-ToM inference as before; (b) under occlusion, recognition failure, or sparse observations, agents trigger minimal communication channels, such as compressed latent codes or uncertainty flags, to exchange partial beliefs without requiring full symbolic protocols; and (c) attention-based gating can adaptively decide when and with whom to communicate, balancing efficiency and robustness. 
This hybrid design preserves MetaMind's decentralized nature while extending its applicability to partially observable multi-agent environments, enabling reliable coordination even in the presence of hidden or transient partners.

\paragraph{Robust Goal Modeling.}
A key direction is to relax the shared-task assumption and pursue open-set goal identification, where agents must infer and act under previously unseen goals and a potentially large goal space. Concretely, we aim to (a) learn compositional and disentangled goal representations that support systematic generalization beyond the goal codebook, (b) incorporate uncertainty-aware inference, e.g., calibrated posteriors or confidence scores, to detect goal novelty and trigger few-shot adaptation, and (c) develop meta-learning and continual discovery procedures that grow the goal repertoire online while preserving past competencies. 
 
\paragraph{Practical Applications.}
A promising direction is to validate MetaMind in real-world embodied multi-agent systems, where uncertainty, partial observability, and physical constraints are unavoidable. 
One feasible pathway is to first deploy MetaMind in controlled robotic platforms, such as multi-robot navigation or cooperative manipulation, where ground-truth sensing allows systematic benchmarking. 
Next, incorporating domain adaptation and sim-to-real transfer techniques can mitigate the gap between simulated and physical environments, ensuring that the reflective inference and goal identifiability mechanisms remain robust under sensor noise and actuation delays. 
Finally, integration with human-AI collaborative settings, such as assistive robotics or mixed-reality coordination tasks, can test the scalability of MetaMind under heterogeneous partners and real-time decision requirements. 
This progressive pipeline provides a practical route to extend the framework beyond simulation and toward embodied intelligence in open-world applications.

\end{document}